\@citea\NAT@hyper@{\NAT@nmfmt{\NAT@nm}\NAT@date}}
\@citea\NAT@nmfmt{\NAT@nm}\NAT@hyper@{\NAT@date}}
\@citea\NAT@hyper@{%
     \NAT@nmfmt{\NAT@nm}%
     \hyper@natlinkbreak{\NAT@aysep\NAT@spacechar}{\@citeb\@extra@b@citeb}%
     \NAT@date}}
\@citea\NAT@nmfmt{\NAT@nm}%
\@citea\NAT@hyper@{%
     \NAT@nmfmt{\NAT@nm}%
     \hyper@natlinkbreak{\NAT@spacechar\NAT@@open\if*#1*\else#1
\NAT@spacechar\fi}%
       {\@citeb\@extra@b@citeb}%
     \NAT@date}}
\@citea\NAT@nmfmt{\NAT@nm}%
\DeclareRobustCommand\onedot{\futurelet\@let@token\@onedot}
\def\@onedot{\ifx\@let@token.\else.\null\fi\xspace}
\def\eg{\emph{e.g}\onedot} 
\def\ie{\emph{i.e}\onedot}
\def\etal{{et al}\onedot}
\def\GRASS#1#2{\mathcal{G}({#1},{#2})}
\def\Vec#1{{\boldsymbol{#1}}}
\def\Mat#1{{\boldsymbol{#1}}}
\def\citet{\cite}
\journalname{Int. Journal of Computer Vision}
\date{May  2015}
\titlerunning{Extrinsic Methods for Coding and Dictionary Learning on
Grassmann Manifolds}
\authorrunning{M.~Harandi \etal}
\newcommand{\tr}{\mathop{\rm  Tr}\nolimits}
\newcommand{\m}{\Mat}
\newcommand{\mh}[1]{\widehat{\m #1}}
\newcommand{\lip}{\left<\!\right.}
\newcommand{\rip}{\left.\!\right>}
\newcommand{\sk}[1]{[\Vec{#1}]}
\newcommand{\RIH}[2]{#2}		
\newcommand{\PGRASS}[2]{\mathcal{PG}({#1},{#2})}
\newcommand{\diag}{{\rm diag}}
\newcommand{\R}{\mathbf{R}}
\newcommand{\calG}{{\mathcal{G}}}
\newcommand{\calM}{{\mathcal{M}}}
\newcommand{\chord}{{\rm chord}}
\newcommand{\geo}{{\rm geod}}
\begin{document} 

\title{
      Extrinsic Methods for Coding and Dictionary Learning on Grassmann
Manifolds
      }

\author{
		Mehrtash~Harandi
			\and
		Richard~Hartley
			\and	
        Chunhua~Shen 
            \and 
        Brian~Lovell 
            \and 
 		Conrad~Sanderson
        }
        
\institute{M.~Harandi 
			and
		R.~Hartley \at College of Engineering and Computer Science, Australian
National University, and \\
    NICTA, Australia
    \\
    \email{\url{mehrtash.harandi@nicta.com.au}}\\
    \email{\url{richard.hartley@nicta.com.au}}
    \and
	C.~Shen  
	    \at	
    School of Computer Science, The University of Adelaide, SA 5005, Australia
    \\
    \email{\url{chunhua.shen@adelaide.edu.au}}    
    \and
	B.~Lovell
		    \at
    The University of Queensland,  Brisbane, Australia
    \\ \email{\url{lovell@itee.edu.au}}
    \and
    C. Sanderson
   	\at
	NICTA, Australia, and University of Queensland, Brisbane, Australia\\
   {\url{http://conradsanderson.id.au}}
}

\thanks{This work was in part supported by Australian Research Council
grant DP130104567.}

\maketitle

\begin{abstract}

Sparsity-based representations have recently led to notable results in
various visual recognition tasks.
In a separate line of research, Riemannian manifolds have been shown useful
for dealing with 
features and models that do not lie in Euclidean spaces.
With the aim of building a bridge between the two realms,
we address the problem of sparse coding and dictionary learning in Grassmann manifolds, \ie, the
space of linear subspaces.
To this end, we propose to embed Grassmann manifolds into the space of
symmetric matrices by an isometric mapping.
This in turn enables us to extend two sparse coding schemes to Grassmann
manifolds.
Furthermore, we {propose an algorithm for learning} a
Grassmann dictionary, atom by atom. 
Lastly, to handle non-linearity in data,
we extend the proposed Grassmann sparse coding and dictionary learning
algorithms through embedding into 
higher dimensional Hilbert spaces.

Experiments on several classification tasks
(gender recognition, gesture classification, scene analysis, face
recognition, action recognition and dynamic texture classification)
show that the proposed approaches achieve considerable improvements in
discrimination accuracy,
in comparison to state-of-the-art methods such as
kernelized Affine Hull Method and graph-embedding Grassmann discriminant
analysis.

\keywords{
         Riemannian geometry
           \and
         Grassmann manifolds
         	\and  
         sparse coding         
         	\and 
        dictionary learning 	
   		}

\end{abstract}

\section{Introduction}
\label{sec:intro}

In the past decade, sparsity has become a popular term in neuroscience,
information theory, signal processing
and related areas~\cite{Olshausen_1996_Nature,CANDES_2006,Donoho_2006%
,Wright:PAMI:2009,ELAD_SR_BOOK_2010}.
Through sparse representation and compressive sensing it is possible to
represent natural signals
like images using only a few non-zero coefficients of a suitable basis.
In computer vision, sparse and overcomplete image representations were
first introduced for modeling the spatial receptive 
fields of simple cells in the human visual system
by~\citet{Olshausen_1996_Nature}.
The linear decomposition of a signal using a few atoms of a dictionary
has been shown to deliver notable results for various visual inference tasks,
such as face recognition~\cite{Wright:PAMI:2009,Wright_2010_IEEE}, image
classification~\cite{Yang_CVPR_2009,Mairal_2008_CVPR},
subspace clustering~\cite{Elhamifar_2013_PAMI}, image
restoration~\cite{Mairal_TIP_2008},
and motion segmentation~\cite{Rao_CVPR_2008} to name a few.
While significant steps have been taken to develop the theory of the sparse
coding and dictionary learning in Euclidean spaces,
similar problems on non-Euclidean geometry have received comparatively little 
attention~\cite{Yuan_ACCV_2009,Harandi_TNNLS15,Harandi_2013_ICCV%
,Guo_TIP_2013,Vemuri_ICML_2013,Cetingul_TMI_2014}.

This paper introduces techniques to sparsely represent
\mbox{$p$-dimensional} linear subspaces
in $\R^{d}$ using a combination of linear subspaces.
Linear subspaces can be considered as the core of
many inference algorithms in computer vision and machine learning.
For example, 
the set of all reflectance functions produced by Lambertian objects lies in
a linear subspace~\cite{Basri_2003_PAMI,Ramamoorthi_2002_PAMI}.
Several state-of-the-art methods for matching videos or image sets
model given data by subspaces~\cite{HAMM2008_ICML,Harandi_CVPR_2011%
,Turaga_PAMI_2011,Vemula:CVPR:2013,Sanderson_AVSS_2012,Shaokang:CVPR:2013}.
Auto regressive and moving average models, which are typically
employed to model dynamics in spatio-temporal processing,
can also be expressed by linear subspaces~\cite{Turaga_PAMI_2011}.
More applications of linear subspaces in computer vision include, 
chromatic noise filtering~\cite{Subbarao:IJCV:2009},
subspace clustering~\cite{Elhamifar_2013_PAMI},
motion segmentation~\cite{Rao_CVPR_2008},
domain adaptation~\cite{GFK:CVPR:2012,Gopalan:PAMI:2013},
and object tracking~\cite{Shirazi_2015}.

Despite their wide applications and appealing properties, subspaces lie on
a special type of Riemannian manifold, 
namely the Grassmann manifold, which makes their analysis very challenging.
This paper tackles and provides efficient solutions to the following
two fundamental problems on Grassmann manifolds (see
Fig.~\ref{fig:conceptual} for a conceptual illustration): 
\begin{enumerate} \itemsep2pt
\item 
\textbf{Coding.} 
Given a subspace $\mathcal{X}$ and a set $\mathbb{D} =
\{\mathcal{D}_i\}_{i=1}^N$ with $N$ elements (also known as atoms),  
where $\mathcal{X}$ and $\mathcal{D}_i$ are linear subspaces,
how can $\mathcal{X}$ be approximated by a combination of atoms
in~$\mathbb{D}$~?

\item 
\textbf{Dictionary learning.} 
Given a set of subspaces $\{\mathcal{X}_i\}_{i=1}^m$,
how can a smaller set of subspaces $\mathbb{D} = \{\mathcal{D}_i\}_{i=1}^N$
be learned to represent $\{\mathcal{X}_i\}_{i=1}^m$ accurately?  

\end{enumerate}

Our main motivation here is to develop new methods for analyzing video
data and image sets. 
This is inspired by the success of sparse signal modeling and related
topics that suggest natural signals like images (and hence video and
image sets as our concern here) can be efficiently
approximated by superposition of atoms of a dictionary.
We generalize the traditional notion of coding, which operates on vectors,
to coding on subspaces. 
Coding with the dictionary of subspaces can then be seamlessly  
used for categorizing video data.

Considering the problem of coding and dictionary learning on Grassmann manifolds,
previous studies (\eg,~\cite{Vemuri_ICML_2013,Cetingul_ISBI_2011,Cetingul_TMI_2014}) 
opt for an intrinsic and general framework for sparse coding on Riemannian manifolds.
This intrinsic formulation exploits the tangent bundle of the manifold for
sparse coding.
Due to the computational complexity of the logarithm map on Grassmann manifolds, 
performing intrinsic sparse coding might be computationally demanding for
the problems that we are interested in (\eg, video analysis). 
Moreover, learning a dictionary based on the intrinsic formulation as
proposed by~\citet{Vemuri_ICML_2013} 
requires computing the gradient of a cost function that includes terms
based on logarithm map. 
As will be  shown later, the involvement of logarithm map 
(which does not have an analytic formulation on Grassmann manifolds)
deprives us from having a closed-form solution 
for learning a Grassmann dictionary intrinsically.

{\bf Contributions}.
In light of the above discussion, in this paper we introduce an extrinsic
methods for coding and dictionary learning on Grassmann manifolds.
To this end, we propose to embed Grassmann manifolds into the space of
symmetric matrices
by a diffeomorphism that preserves several properties of the Grassmannian structure.
We show how coding can be accomplished in the induced space
and devise an algorithm for updating a Grassmann dictionary atom by atom.
Furthermore, in order to accommodate non-linearity in data, we propose
kernelized versions of our coding and dictionary learning algorithms.
Our contributions are therefore three-fold: 
\begin{itemize}
\item[1.]
    We propose to perform coding and dictionary learning for data points on
    Grassmann manifolds by embedding the manifolds into the space of
    symmetric matrices.
\item[2.]
    We derive kernelized versions of the proposed coding and dictionary
learning algorithms (\ie, embedded into Hilbert spaces), 
    which can address non-linearity in data.
\item[3.]
    We apply the proposed Grassmann dictionary learning methods to
    several computer vision tasks where the data are videos or image sets.  
    Our proposed algorithms outperform  state-of-the-art methods
    on a wide range of classification tasks,
    including gender recognition from gait, scene analysis, 
    face recognition from image sets, action recognition and dynamic
texture classification.
\end{itemize}

\begin{figure}[!tb]
  \centering
  \includegraphics[width=1\columnwidth,keepaspectratio]
{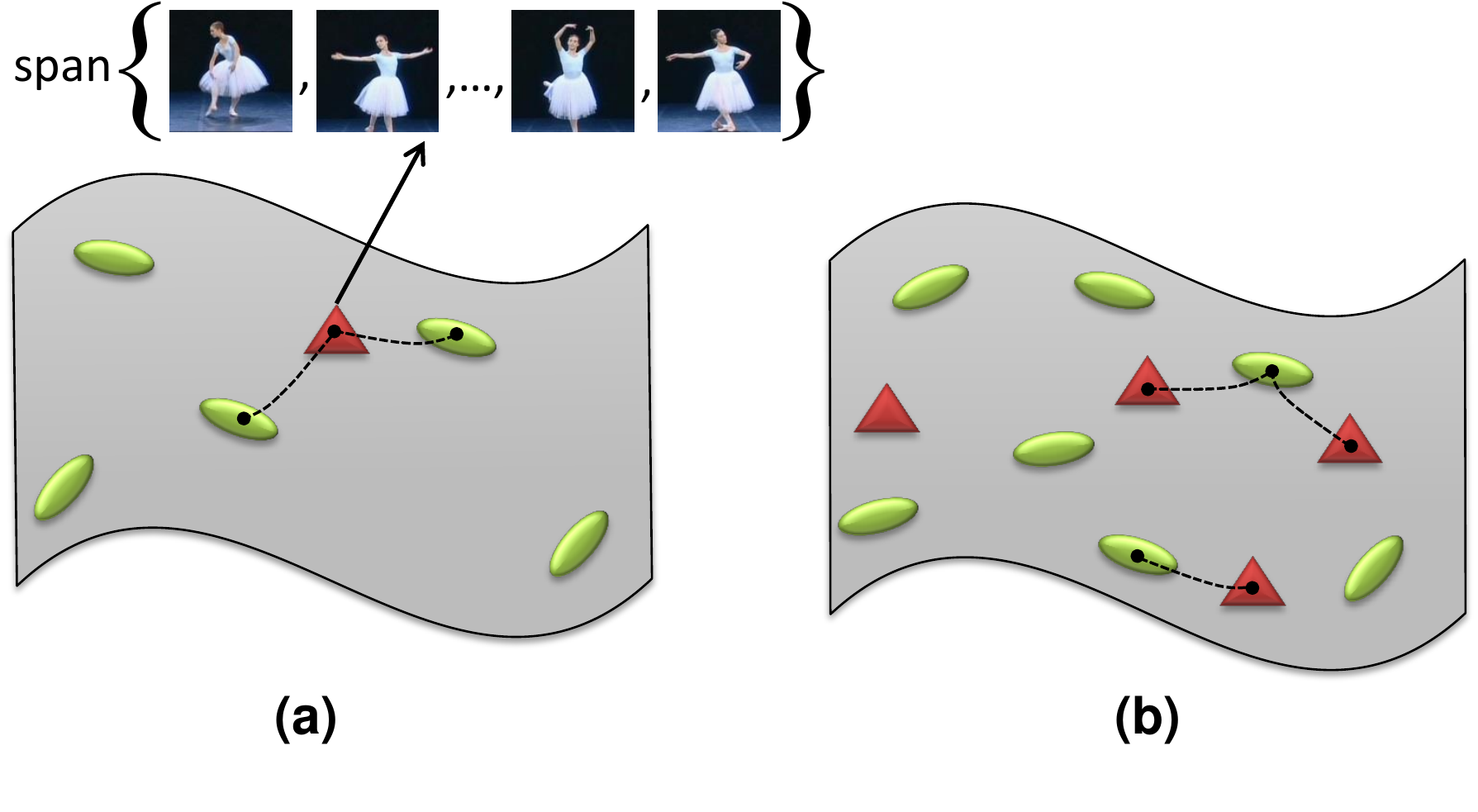}   
  \caption
    { \small A conceptual diagram of the problems addressed in this work.
    A~video or an image set can be modeled by a linear subspace,
    which can be represented as a point on a Grassmann manifold.
    {\bf (a)~Sparse coding on a Grassmann manifold.}
    Given a dictionary (green ellipses) and a query signal (red triangle) 
    on the Grassmann manifold,
    we are interested in estimating the query signal by a sparse
combination of atoms
    while taking into account the geometry of the manifold (\eg, curvature).
    {\bf (b)~Dictionary learning on a Grassmann manifold.}
    Given a set of observations (green ellipses) on a Grassmann manifold, 
    we are interested in determining a dictionary (red triangles) to describe 
    the observations sparsely, while taking into account the geometry.
    This figure is best seen in color.
    }
  \label{fig:conceptual}
\end{figure}

\section{Background Theory}
\label{sec:background}

This section overviews Grassmann geometry and provides the groundwork for
techniques described in following sections.
Since the term ``manifold'' itself is often used in computer vision in a
somewhat loose sense, we emphasize 
that the word is used in this paper in its strict mathematical sense.

Throughout the paper, bold capital letters denote matrices (\eg $\Mat{X}$)
and bold lower-case letters denote column vectors (\eg $\Vec{x}$).
The notations $[ \cdot ]_i$ and $[\cdot]_{i,j}$ are used to demonstrate
elements in position $i$ and $(i,j)$ in a vector and matrix, respectively.
$\Vec{1}_{d} \in \R^{d}$ and $\Vec{0}_{d} \in \R^{d}$ are
vectors of ones and zeros.
$\mathbf{I}_d$ is the $d \times d$ identity matrix.
$\Vert \Vec{x} \Vert_1 = \sum\nolimits_i|[x]_i|$ and $\Vert \Vec{x} \Vert =
\sqrt{\Vec{x}^T\Vec{x}}$
denote the $\ell_1$ and $\ell_2$ norms, respectively, with $T$ indicating
transposition.
$\Vert \Mat{X} \Vert_F = \sqrt{\tr \big(\Mat{X}^T\Mat{X}\big)}$ designates
the Frobenius norm,
with $\tr(\cdot)$ computing the matrix trace.


\subsection{Grassmann Manifolds and their Riemannian Structure}

For $0 < p \leq d$, the space of $d \times p$ matrices with orthonormal columns is not a Euclidean space
but a Riemannian manifold, the Stiefel manifold $\mathrm{St}(p,d)$.
That is,
\begin{equation}
    \mathrm{St}(p,d) \triangleq \{ \Mat{X} \in \R^{d \times p} :
\Mat{X}^T \Mat{X} = \mathbf{I}_p\}.
    \label{eqn:def_stiefel}
\end{equation}
By grouping together all points on $\mathrm{St}(p,d)$ that span the same
subspace
we obtain the Grassmann manifold~$\GRASS{p}{d}$. More formally,
the Stiefel manifold $\mathrm{St}(p,d)$ admits a right action by
the orthogonal group $O(p)$ (consisting of $p\times p$ orthogonal matrices);
for $\m X \in \mathrm{St}(p,d)$ and $\m U\in O(p)$, the matrix
$\m X \m U$ is also an element of $\mathrm{St}(p,d)$.  Furthermore
the columns of $\m X$ and $\m X \m U$ span the same subspace of $\R^d$,
and are to be thought of representatives of the same element of the
Grassmann manifold,
$\GRASS{p}{d}$. Thus, the orbits of this group action form the elements
of the Grassman manifold.
The resulting set of orbits is a manifold according
to the quotient manifold theorem (see Theorem 21.10 in~\cite{Lee_Book_2012}).
The details of this construction are not critical to an understanding
of the rest of this paper.


%

%
An element $\mathcal{X}$ of $\GRASS{p}{d}$ can be specified by a
basis, \ie, a set of $p$ vectors $\Vec{x}_1,~\cdots,~\Vec{x}_p$ such that
$\mathcal{X}$ is the set of all their linear combinations.
When the $\Vec{x}$ vectors are ordered as the columns of a $d \times p$ matrix
$\Mat{X}$, then $\Mat{X}$ is said to span
$\mathcal{X}$ and  we write $\mathcal{X} = \mathrm{span}(\Mat{X})$. In what
follows, we refer to a subspace $\mathcal{X}$ and hence 
a point on $\GRASS{p}{d}$ by its $d \times p$ basis matrix $\Mat{X}$.
The choice of the basis is 
not unique but it has no effect in what we develop later.

A Riemannian metric on a manifold is defined formally as
a smooth inner product on the tangent bundle.
(See~\citet{Absil_2004} for the form of Riemannian metric on $\GRASS{p}{d}$).
However, we shall be
concerned only with geodesic distances on
the Grassmann manifold, which allows us to avoid many technical points and give a straight-forward definition.

%

%

On a Riemannian manifold, points are connected via smooth curves. The
geodesic distance between two points is
defined as the length of shortest curve in the manifold (called a {\em geodesic}) connecting them.
The Stiefel manifold $\mathrm{St}(p,d)$ is embedded in the
set of $d\times p$ matrices, which may be seen as a 
Euclidean space $\R^{d \times p}$ with
distances defined by the Frobenius norm.  Consequently the length of a smooth
curve (or path) in $\mathrm{St}(p,d)$ is defined as its length as a curve in 
$\R^{d \times p}$. Now, given two points $\mathcal X$ and $\mathcal Y$ in
$\GRASS{p}{d}$, the distance $d_\geo(\mathcal X, \mathcal Y)$ is defined as
the length of the shortest path in $\mathrm{St}(p,d)$ between
any two points $\m X$ and $\m Y$ in $\mathrm{St}(p,d)$
that are members of the equivalence clases $\mathcal X$ and
$\mathcal Y$.


The geodesic distance has an interpretation as the magnitude of the smallest
rotation 
that takes one subspace to the other. If {$\Theta = [\theta_1, \theta_2,
\ldots, \theta_p]$}
is the sequence of principal angles between two subspaces $\mathcal{X}_1
\in \GRASS{p}{d}$ and 
$\mathcal{X}_2 \in \GRASS{p}{d}$, then
 $d_\geo\left(\mathcal{X}_1,\mathcal{X}_2\right)=\|\Theta\|_2$.


\begin{definition}[Principal Angles]
Let $\Mat{X}_1$ and $\Mat{X}_2$ be two matrices of size $d \times p$ with
orthonormal columns. The principal angles
$0 \leq \theta_1 $ $\leq \theta_2 \leq $ $\cdots $ $ \leq \theta_p \leq
\pi/2$ between two subspaces
$\operatorname{Span}(\Mat{X}_1)$ and $\operatorname{Span}(\Mat{X}_2)$, are
defined recursively by
\end{definition}
\begin{eqnarray}
  &\cos(\theta_i)
  =
  \underset{\Vec{u}_i \in \operatorname{Span}(\Mat{X}_1)}{\max}\;
  \underset{\Vec{v}_i \in \operatorname{Span}(\Mat{X}_2)}{\max}\;
  \Vec{u}_i^T \Vec{v}_i  \\
  \text{s.t.:}
  &\|\Vec{u}_i\|_2 \mbox{~=~} \|\Vec{v}_i\|_2 \mbox{~=~} 1  \nonumber\\
  &\Vec{u}_i^T \Vec{u}_j \mbox{~=~} 0;\; j=1,2,\cdots,i-1     \nonumber\\
  &\Vec{v}_i^T \Vec{v}_j \mbox{~=~} 0;\; j=1,2,\cdots,i-1     \nonumber
  \label{eqn:Principal_Angle}
\end{eqnarray}%
In other words, the first principal angle $\theta_1$ is the smallest
angle between all pairs of unit vectors in the first and the second
subspaces. The rest of the principal
angles are defined similarly.

Two operators, namely the logarithm map $\log_{\Vec{x}}(\cdot):~\calM
\to T_{\Vec{x}}(\calM)$
and its inverse, the exponential map $\exp_{\Vec{x}}(\cdot):T_{\Vec{x}}(
\calM) \to \calM$
are defined over {Riemannian manifolds} to switch between
the manifold and the tangent space at $\Vec{x}$.
A key point here is the fact that both the logarithm map and its inverse do
not have closed-form solutions for Grassmann manifolds.
Efficient numerical approaches for computing both maps were proposed
by~\cite{ANUJ_2003,Begelfor_CVPR_2006}. In this paper, however, 
the exponential and logarithm maps will only be used when describing
previous work of other authors.


\section{Problem Statement}
\label{sec:problem_def}

In vector spaces, by {\it coding} we mean the general notion of
representing a vector $\Vec x$ (the {\em query}) as some combination of
other vectors $\Vec d_i$ belonging to a {\em dictionary}. Typically,
$\Vec x$ is expressed as a linear combination
$\Vec x = \sum_{j=1}^N \sk{y}_j \Vec d_j$,
or else as an {\em affine combination} in which the coefficients
$\sk{y}_j$ satisfy the additional constraint $\sum_{j=1}^N \sk{y}_j = 1$.
(This constraint may also be written as $\Vec 1^T \Vec y = 1$.)

In {\it sparse coding} one seeks to express the query in terms of a small
number of dictionary elements.
Given a query $\Vec{x} \in \R^d$ and a dictionary $\mathbb{D}$ of
size $N$, \ie, 
$\mathbb{D}_{d \times N} = \{ \Vec{d}_1, \Vec{d}_2, \cdots, \Vec{d}_N
\}$
with atoms $\Vec{d}_i \in \R^d$, the problem of coding $\Vec x$
can be formulated as solving the minimization problem:


\begin{equation}
	l_E(\Vec{x},\mathbb{D}) \triangleq 
    \underset{\Vec{y}}{\min} \:
    \Bigl\| \Vec{x}- \sum\nolimits_{j=1}^{N} \sk y_{j} \Vec{d}_j
    \Bigr\|_2^2
    + \lambda f( \Vec{y} ). 
    \label{eqn:euc_coding}
\end{equation}
The domain of $\Vec y$ may be the whole of $\R^N$, so that the
sum runs over all linear combinations of dictionary elements (or {\em atoms}), or
alternatively, the extra constraint $\Vec 1^T \Vec y$ may be specified,
to restrict to affine combinations.

The idea here is to (approximately) reconstruct the query $\Vec{x}$ by a
combination of dictionary atoms 
while forcing the coefficients of combination, \ie, $\Vec{y}$, to have
some structure.  The quantity $l_E(\Vec{x},\mathbb{D})$ can be thought 
of as a coding cost combining the squared residual coding error,
reflected in the energy term $\|\cdot\|_2^2$ in~\eqref{eqn:euc_coding}, along with a penalty term
$f( \Vec{y})$, which encourages some structure such as sparsity.
The function $f:\R^N \to \R$
could be the $\ell_1$ norm, as in the Lasso
problem~\cite{Tibshirani_1996},
or some form of locality as proposed by~\citet{Yu:NIPS:2009}
and~\citet{Wang_CVPR_2010_LLC}.

The problem of dictionary learning is to determine $\mathbb{D}$
given a finite set of observations 
$\left \{ \Vec{x}_i  \right \}_{i=1}^{m}, \: \Vec{x} \in
\R^d$,
by minimizing the total coding cost for all observations, namely
\begin{equation}
	h(\mathbb{D}) \triangleq \sum\limits_{i=1}^{m} l_E(\Vec{x}_i,\mathbb{D})
    \label{eqn:euc_dic_learning} ~.
\end{equation}
A ``good'' dictionary has a small residual coding error for all
observations $\Vec x_i$ while 
producing codes $\Vec{y}_i \in \R^N$ with the desired
structure. For example, 
in the case of sparse coding, the $\ell_1$ norm is usually taken as
$f(\cdot)$
to obtain the most common form of dictionary learning in the literature.
More specifically, the sparse dictionary learning problem may be written in full as that of
jointly minimizing the total coding cost over all choices of coefficients
and dictionary:
\begin{equation}
\label{eqn:euc_dic_learning2}
\underset{ \{\Vec{y}_i \}_{i=1}^m, \mathbb{D}}{\min} \:
    \sum\limits_{i=1}^{m}\,\,\Bigl\| \Vec{x}_i- \sum\limits_{j=1}^{N}
\sk{y_i}_j \Vec{d}_j \Bigr\|_2^2
    +\lambda 
    \sum\limits_{i=1}^{m} \|\Vec{y}_i\|_1.  
\end{equation}
A common approach to solving this is to alternate
between the two sets of variables, $\mathbb{D}$ and $\{\Vec{y}_i \}_{i=1}^m$, 
as proposed for example by~\cite{Aharon_2006_ksvd}
(see~\cite{ELAD_SR_BOOK_2010} for a detailed treatment).
Minimizing~\eqref{eqn:euc_dic_learning2} over sparse
codes $\Vec{y}_i$
while dictionary $\mathbb{D}$ is fixed is a convex problem. Similarly,
minimizing the overall problem
over $\mathbb{D}$ with fixed $\{\Vec{y}_i \}_{i=1}^m$ is convex as well.

In generalizing the coding problem to a more general space $\mathcal{M}$,
(\eg, Riemannian manifolds), one may write~\eqref{eqn:euc_coding} as 
\begin{equation}
\label{eqn:grass_coding}
	l_\calM(\mathcal{X},\mathbb{D}) \triangleq  
	\underset{\Vec{y}}{\min} \:  \Big(
    d_\calM
\big( \mathcal{X}, \,
    C(\Vec y, \mathbb{D}) \big)^2
    +\lambda f(\Vec{y})\Big) .      
\end{equation}

Here $\mathcal{X}$ and $\mathbb{D} =
\{\mathcal{D}_j\}_{j=1}^N$
are points in the space $\calM$, while $d_\calM(\cdot, \cdot)$
is some distance metric and 
$C: \R^N \times \calM^N \to \calM$ is an {\em encoding function}, 
assigning an element of $\calM$ to every choice of coefficients
and dictionary.  Note that~\eqref{eqn:euc_coding} is a special case
of this, in which $C(\Vec y, \mathbb{D})$ represents linear or affine 
combination,
and $d_\calM(\cdot, \cdot)$ is the Euclidean distance metric.
To define the coding, one need only
specify the metric $d_\calM(\cdot, \cdot)$ to be used and the 
encoding function $C(\cdot, \cdot)$.  Although this formulation may apply to
a wide range of spaces, here we shall be concerned chiefly with
coding on Grassmann manifolds.


\section{Related Work}
\label{subsec:related_work}

A seemingly straightforward method for coding and dictionary learning
is through embedding manifolds into Euclidean spaces via a fixed tangent space.
The embedding function in this case would be {$\log_{\mathcal{P}}(\cdot)$},
where {$\mathcal{P}$} is some default base point.
The natural choice for the base point on $\GRASS{p}{d}$ is
\begin{equation*}
\mathcal{P}= \mathrm{span} \Bigg(
\begin{bmatrix}
\mathbf{I}_{p \times p}\\
\Mat{0}_{(d-p) \times p}
\end{bmatrix}
\Bigg)\;.
\end{equation*}

By mapping points in the manifold $\calM$ to the tangent space,
the problem at hand is transformed to its Euclidean counterpart.
For example in the case of sparse coding, instead of~\eqref{eqn:grass_coding}, 
the encoding cost may be defined as follows:
\begin{align}
\begin{split}
\label{eqn:log_euc_grass_coding}
	l_{\calM}(\mathcal{X},\mathbb{D}) \triangleq  
	\underset{\Vec{y}}{\min} \:  
    \Bigl\| \log_\mathcal{P}(\mathcal{X}) -
    \sum\limits_{j=1}^{N} \sk y_{j} \log_\mathcal{P}(\mathcal{D}_j)
\Bigr\|^2_\mathcal{P}
    +\lambda f(\Vec{y})   
\end{split}
\end{align}
where the notation $\|\cdot\|_{\mathcal{P}}$ reminds us that
the norm is in the tangent space at $\mathcal X$.
We shall refer to this straightforward approach as {\it Log-Euclidean}
sparse coding 
(the corresponding steps for Grassmann manifolds in Algorithm~\ref{alg:alg_log_euc_sc}),
following the terminology used in~\cite{Arsigny:2006}.
This idea has been deployed for action recognition on the manifold of
Symmetric Positive Definite matrices
by~\citet{Yuan_ACCV_2009} and~\citet{Guo_TIP_2013}.
Since on a tangent space only distances to the base point are
equal to true geodesic distances, the
Log-Euclidean solution does not take into account the true structure of the
underlying Riemannian manifold.  Moreover, the solution is
dependent upon the particular point $\mathcal P$ used as a base point.

\begin{algorithm}[!tb]\small
	\SetKwBlock{KwInit}{Initialization.}{}
	\SetKwBlock{KwProc}{Processing.}{}
	\SetAlgoLined	
	\setcounter{AlgoLine}{1}	
	\KwIn
	{Grassmann dictionary $\{ \mathcal{D}_i
\}_{i=1}^N,~\mathcal{D}_i~\in~\GRASS{p}{d}$; 
	the query sample $\mathcal{X}\in~\GRASS{p}{d}$.
	}
	\KwOut
	{The sparse code {$\Vec{y}^\ast$}.
	}
	\BlankLine
	\KwInit{
	\For{$i \gets 1$ \KwTo $N$}{
	\mbox{$\Vec{d}_i \gets \log_{\mathcal{P}}(\mathcal{D}_i)$}\;
	}
	\mbox{$\Mat{A} \gets \left[\Vec{d}_1 | \Vec{d}_2 | \cdots |\Vec{d}_N
\right]$} \;
	}
	\KwProc{
	\mbox{$\Vec{x} \gets \log_{\mathcal{P}}(\mathcal{X})$}\;
	\mbox{$\Vec{y}^\ast \gets \arg\min_{\Vec{y}} 
	\left\| \Mat{x}-\Mat{A}^Ty \right\|_2^2 + \lambda \left\| \Vec{y}
\right\|_1$}\;	
	}
	\caption{\small Log-Euclidean sparse coding on Grassmann manifolds.}
	\label{alg:alg_log_euc_sc}
\end{algorithm} 

A more elegant and intrinsic approach 
is to work in
the tangent bundle of the manifold, varying the particular tangent space
according to the point $\mathcal{X}$ being approximated.  Such an idea 
has roots in the work of~\citet{Goh_CVPR_2008} which extends various
methods of dimensionality reduction 
to Riemannian manifolds.
As for sparse coding,~\citet{Cetingul_ISBI_2011},~\citet{Cetingul_TMI_2014}
and \citet{Vemuri_ICML_2013} show that 
by working in the tangent space at $\mathcal{X}$, \ie,
$T_\mathcal{X}(\calM)$, the encoding cost in~\eqref{eqn:grass_coding} can be written as 
\begin{align}
\label{eqn:intrinsic_sc}
l_\calM(\mathcal{X},\mathbb{D}) \triangleq &
\min_{\substack{\Vec y\in\R^N \\ \Vec{1}^T_{}\Vec{y} = 1}}
\,\,\Big\|\sum_{j=1}^{N} \sk y_j
\log_{\mathcal{X}}(\mathcal{D}_j) \Big\|^2_{\mathcal{X}}  + \lambda f(\Vec{y})	
\end{align}
To see the relationship between~\eqref{eqn:grass_coding} and~\eqref{eqn:intrinsic_sc}, note that
$\log_{\mathcal{X}}(\mathcal{D}_j)$ is unambiguously defined for
most pairs $(\mathcal{X},\mathcal{D}_j)$.  
If the encoding function $C(\cdot, \cdot)$ is defined by
\[
C(\Vec y, \mathbb{D}) = \exp_{\mathcal X} \Big(\sum_{j=1}^{N} \sk y_j
\log_{\mathcal{X}}(\mathcal{D}_j)\Big)\]
then
\[
\Big\|\sum_{j=1}^{N} \sk y_j
\log_{\mathcal{X}}(\mathcal{D}_j) \Big\|_\mathcal{X} = \| \log_{\mathcal X}\,C(\Vec y, \mathbb{D}) \|_{\mathcal X}
= d_\geo (\mathcal{X}, C(\Vec y, \mathbb{D}) ) ~.
\]
The extra affine constraint, \ie, $\Vec{1}^T_{}\Vec{y} = 1$ is necessary
to avoid a trivial solution and has been used successfully in other applications such as
dimensionality reduction~\cite{Roweis_2000_LLE}, 
subspace clustering~\cite{Elhamifar_2013_PAMI} and
coding~\cite{Yu_2010_ICML,Wang_CVPR_2010_LLC} 
to name a few.

Turning our attention to the problem of dictionary learning, given a set of
training data 
$\{\mathcal{X}_i\}_{i=1}^m,\; \mathcal{X}_i \in \calM$,
recasting the problem of~\eqref{eqn:euc_dic_learning2} to the
Riemannian manifold $\calM$ by following ~\cite{Vemuri_ICML_2013} results in
\begin{align}
	&\underset{ \{\Vec{y}_i \}_{i=1}^m, \mathbb{D}}{\min} \:
    \sum\limits_{i=1}^{m}\bigg\| \sum\limits_{j=1}^{N}\sk{y_i}_j
\log_{\mathcal{X}_i}(\mathcal{D}_j)\bigg\|^2
    +\lambda     \sum\limits_{i=1}^{m} \|\Vec{y}_i\|_1 
    \label{eqn:dic_learn_intrinsic_riemann}\\	
	&\mathrm{s.t.}~~~\Vec{1}^T_{}\Vec{y}_i = 1,\; i = 1,2,\cdots,m.\nonumber
\end{align}
Similar to the Euclidean case, the problem in~\eqref{eqn:dic_learn_intrinsic_riemann} is solved by iterative
optimization over $\{\Vec{y}_i\}_{i= 1 }^m$ and $\mathbb{D}$. Computing the
sparse codes $\{\Vec{y}_i\}_{i=1}^m$ is done by solving~\eqref{eqn:intrinsic_sc}. To update $\mathbb{D}$,~
\citet{Vemuri_ICML_2013} proposed a gradient descent approach 
along geodesics. That is, the update of $\mathcal{D}_r$ at time
$t$ while $\{\Vec{y}_i\}_{i=1}^m$ and $\mathcal{D}_j, j \neq r$ are kept
fixed has the form 
\begin{equation}
	\mathcal{D}_r^{(t)} = \exp_{\mathcal{D}_r^{(t-1)}}(-\eta\Delta).
	\label{eqn:intrinsic_gradient_descent}
\end{equation}
In Eq.~\eqref{eqn:intrinsic_gradient_descent} $\eta$ is a step size and the
tangent vector $\Delta:\R \to T_{\mathcal{D}_r}(\calM)$
represents the direction of maximum ascent. That is $\Delta =
\mathrm{grad} \mathcal{J}(\mathcal{D}_r)$%
\footnote{On an abstract Riemannian manifold $\calM$, the gradient of
a smooth real function $f$ at a point
$x \in \calM$, denoted by $\mathrm{grad} f(x)$, 
is the element of $T_x(\calM)$ satisfying
$\langle \mathrm{grad}f(x), \zeta \rangle_x = Df_x[\zeta]$ for all $\zeta
\in T_x(\calM)$. Here,
$Df_x[\zeta]$ denotes the directional derivative of $f$ at $x$ in the
direction of $\zeta$. The interested reader is
referred to~\cite{Absil:2008} for more details on how the gradient of a
function on Grassmann manifolds can be computed.}%
, where 
\begin{equation}
\mathcal{J} = \sum\limits_{i=1}^{m}\Big\| \sum\limits_{j=1}^{N}\sk{y_i}_j
\log_{\mathcal{X}_i}(\mathcal{D}_j)\Big\|^2.
\end{equation}%

Here is where the difficulty arises. Since the logarithm map does not have
a closed-form expression on Grassmann manifolds, 
an analytic expression for $\Delta$ in Eq.~
\eqref{eqn:intrinsic_gradient_descent} cannot be sought for the case of
interest
in this work, \ie, Grassmann manifolds%
\footnote{This is acknowledged by~\citet{Vemuri_ICML_2013}.}.
Having this in mind, we propose extrinsic approaches to coding and
dictionary learning specialized for Grassmann manifolds.
Our proposal is different from the intrinsic method in following points:
\begin{itemize}
\item As compared to the intrinsic approach, our extrinsic coding methods
are noticeably faster. This is especially attractive
for vision applications where the dimensionality of Grassmann manifolds is
high.
\item 
{Similar to the intrinsic method, our proposed dictionary
learning approach is an alternating method.
However and in contrast to the intrinsic method, the updating rule for
dictionary atoms admits
an analytic form.}
\item Our proposed extrinsic methods can be kernelized. Such kernelization
for the intrinsic method is not possible due to the fact that the logarithm
map does not have a closed-form and analytic expression on Grassmann
manifolds. Kernelized coding enables us to model non-linearity in data
better (think of samples that do not lie on a subspace in low-dimensional
space but could form one in a higher-possibly infinite- dimensional space).
As shown in our experiments, kernelized coding can result in higher
recognition accuracies 
as compared to linear coding.
\end{itemize}

\section{Coding on Grassmann Manifolds}
\label{sec:Coding}

In this work, we propose to embed Grassmann manifolds into the space of
symmetric matrices  
via the \emph{projection embedding}~\cite{Chikuse:2003}.
The projection embedding has been previously used in 
subspace tracking~\cite{Srivastava:2004},
clustering~\cite{Cetingul:CVPR:2009},
discriminant analysis~\cite{HAMM2008_ICML,Harandi_CVPR_2011} 
and classification purposes~\cite{Vemula:CVPR:2013}.
Let $\PGRASS{p}{d}$ be the set
of $d\times d$ idempotent and symmetric matrices of rank $p$. 
The projection embedding $\Pi: \GRASS{p}{d} \rightarrow \PGRASS{p}{d}$ is given by 
$\Pi(\mathcal{X}) = \Mat{X}\Mat{X}^T$ where $\mathcal{X} = \mathrm{span}(\Mat{X})$.
The mapping $\Pi$ is a diffeomorphism~\cite{Chikuse:2003},
and $\PGRASS{p}{d}$ may be thought of as simply an alternative form of
the Grassmann manifold. It is a smooth, compact submanifold of
$\rm{Sym}(d)$ of dimension $d(d - p)$~\cite{Helmke:2007}.

From its embedding in $\mathrm{Sym}(d)$, the manifold $\PGRASS{p}{d}$
inherits a Riemannian metric (and hence a notion of path length),
from the Frobenius norm in $\mathrm{Sym}(d)$.
It is an important fact that the mapping $\Pi$ is also an isometry
with respect to the Riemannian metric on $\PGRASS{p}{d}$
and the standard Riemannian metric, defined in
Section~\ref{sec:background} for $\GRASS{p}{d}$~\cite{Chikuse:2003}.  Hence, $\Pi$ preserves length of curves~\cite{Harandi_2013_ICCV}. 
The shortest path length between two points in $\PGRASS{p}{d}$ defines
a distance metric called the geodesic metric.

Working with $\PGRASS{p}{d}$ instead of
$\GRASS{p}{d}$ has the advantage that each element of $\PGRASS{p}{d}$
is a single matrix, whereas elements of $\GRASS{p}{d}$ are 
equivalence classes of matrices. In other words,
if $\Mat{X}$ and $\Mat{X}^\star = \Mat{X}\Mat{R},~\Mat{R} \in \mathrm{O}(p)$ are two bases for $\mathcal{X}$, 
then $\Pi(\Mat{X}) = \Pi(\Mat{X}^\star)$.

In future, we shall denote $\m X \m X^T$ by $\mh X$, the
hat representing the action of the projection embedding. 
Furthermore,
$\lip\cdot, \cdot\rip$
represents the Frobenius inner product: thus
$\lip\mh X, \mh Y\rip = \tr (\mh X \mh Y)$. Note that in computing
$\lip\mh X, \mh Y\rip$ it is not necessary to compute $\mh X$
and $\mh Y$ explicitly (they may be large matrices).  Instead, note that
$\lip\mh X, \mh Y\rip = \tr(\mh X \mh Y) 
= \tr (\m X \m X^T \m Y \m Y^T)
= \tr(\m Y^T \m X \m X^T \m Y) = \| \m Y^T \m X \|_F^2$. This is advantageous,
since $\m Y^T \m X$ may be a substantially smaller matrix.

Apart from the geodesic distance metric, 
an important metric used in this paper is
the {\em chordal metric}, defined by 
\begin{align}
    d_\chord(\mh X, \mh Y) &= \|\Pi(\mathcal{X}) -
\Pi(\mathcal{Y})\|_F 
    = \|\mh X - \mh Y\|_F\;,
    \label{eqn:proj_dist}
\end{align}
This metric will be used in the context of~\eqref{eqn:grass_coding}
to recast the coding and consequently dictionary-learning problem in terms of chordal distance.
Before presenting our proposed methods, 
we establish an interesting link between coding and the notion of weighted mean in a metric space.

\paragraph{\bf Weighted Karcher mean. }
The underlying concept of coding using a dictionary is to represent
in some way a point in a space of interest as a combination of other elements
in that space.  In the usual method of coding in $\R^d$ given by
(\ref{eqn:euc_coding}), each $\Vec{x}$ is represented by a linear combination
of dictionary elements $\Vec{d}_j$, where the first term represents the coding
error. For coding in a manifold, the problem to address is that linear 
combinations do not make sense.  We wish to find some way in which an element
$\mathcal{X}$ may be represented in terms of other dictionary elements $\mathcal{D}_j$
as suggested in (\ref{eqn:grass_coding}). For a
proposed method to generalize the $\R^d$ case, one may prefer a method that
is a direct generalization of the Euclidean case in some way.

In $\R^d$, a different way to consider the expression  
$\sum_{j=1}^N \sk y_{j} \Vec{d}_j$ in~\eqref{eqn:euc_coding} is as a weighted mean of the points $\Vec{d}_j$
This observation relies on the following fact, which is verified using
a Lagrange multiplier method.
\begin{lemma}
Given coefficients $\Vec y$ with $\sum_{i=1}^N \sk{y}_i = 1$, 
and dictionary elements $\{ \Vec d_1, \ldots \Vec d_N\}$ in $\R^d$, the point $\Vec x^*\in\R^d$ that minimizes
$\sum_{i=1}^N \, \sk{y}_i \, \| \Vec x - \Vec d_i\|_F^2$
is given by $\Vec x^* = \sum_{i=1}^N \sk{y}_i \, \Vec d_i$.
\end{lemma}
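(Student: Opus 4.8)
The plan is to reduce the minimization to an elementary completion of squares, so no genuine manifold machinery is involved. Set $g(\Vec x) = \sum_{i=1}^N \sk{y}_i\,\|\Vec x - \Vec d_i\|^2$ and expand each term as $\|\Vec x\|^2 - 2\Vec x^T\Vec d_i + \|\Vec d_i\|^2$. Summing over $i$ and invoking the hypothesis $\sum_{i=1}^N \sk{y}_i = 1$, the coefficient multiplying $\|\Vec x\|^2$ becomes exactly $1$, giving
\begin{equation*}
g(\Vec x) = \|\Vec x\|^2 - 2\,\Vec x^T\!\Bigl(\sum_{i=1}^N \sk{y}_i\Vec d_i\Bigr) + \sum_{i=1}^N \sk{y}_i\|\Vec d_i\|^2 .
\end{equation*}
Writing $\bar{\Vec d} = \sum_{i=1}^N \sk{y}_i\Vec d_i$, the first two terms above coincide with the first two terms of $\|\Vec x - \bar{\Vec d}\|^2$, so completing the square yields $g(\Vec x) = \|\Vec x - \bar{\Vec d}\|^2 + c$, where $c = \sum_i \sk{y}_i\|\Vec d_i\|^2 - \|\bar{\Vec d}\|^2$ does not depend on $\Vec x$. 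Hence $g$ attains its minimum exactly at $\Vec x = \bar{\Vec d}$, which is the asserted formula $\Vec x^* = \sum_{i=1}^N \sk{y}_i\Vec d_i$. Equivalently, one may compute $\nabla g(\Vec x) = 2\sum_i \sk{y}_i(\Vec x - \Vec d_i) = 2(\Vec x - \bar{\Vec d})$, set it to zero, and note that the Hessian is $2\mathbf{I}_d$, positive definite, so the stationary point is the unique global minimizer.

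I do not expect any real obstacle. The only subtlety worth stating explicitly is that the weights $\sk{y}_i$ are only required to sum to one and may individually be negative, so $g$ need not be a convex combination of the convex functions $\Vec x\mapsto\|\Vec x-\Vec d_i\|^2$, and a termwise convexity argument is therefore not available. What makes the argument go through is precisely the affine constraint $\sum_i\sk{y}_i = 1$: it forces the leading coefficient of the aggregate quadratic $g$ to equal $1>0$, hence $g$ is strictly convex and has the single critical point $\bar{\Vec d}$. The Lagrange-multiplier derivation alluded to in the text amounts to the same computation; since the minimization over $\Vec x$ is itself unconstrained, the plain stationarity argument above already suffices.
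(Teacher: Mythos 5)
Your proof is correct. The paper itself never writes the argument out --- it only remarks that the lemma ``is verified using a Lagrange multiplier method'' --- so there is no written proof to match against; your completion-of-squares derivation is the natural way to fill that gap, and it is slightly cleaner than what the paper gestures at. The one genuinely valuable observation you add is the point about signs: since the $\sk{y}_i$ are only constrained to sum to one and may be negative, $g$ is not a convex combination of the individual squared distances, and the strict convexity of $g$ must instead be read off from the fact that the affine constraint forces the coefficient of $\|\Vec x\|^2$ to equal $1$. That is exactly the right caveat, and it is not acknowledged in the paper. You are also right that a Lagrange multiplier is superfluous here: the constraint $\sum_i \sk{y}_i = 1$ is a hypothesis on the fixed weights, not a constraint on the optimization variable $\Vec x$, so the minimization over $\Vec x$ is unconstrained and plain stationarity (gradient $2(\Vec x - \bar{\Vec d}) = 0$ with Hessian $2\mathbf{I}_d$) already settles the matter. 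The only cosmetic mismatch is that the lemma writes the norm as $\|\cdot\|_F$ while you use the Euclidean norm $\|\cdot\|$; for vectors in $\R^d$ these coincide, so nothing is affected.
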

In other words, the affine combination of dictionary elements is
equal to their weighted mean.  Although linear combinations
are not defined for points on manifolds or metric spaces, a weighted mean is.
\begin{definition}
\label{def:Karcher-mean}
Given points $\mathcal{D}_i$ on a Riemannian manifold $\mathcal{M}$, 
and weights $\sk y_i$, the point $\cal X^*$ that minimizes
$\sum_{i=1}^N \sk y_i \, d_g (\mathcal{X}, \mathcal{D}_i)^2$, 
is called the weighted Karcher mean of the points $\mathcal{D}_i$ with
weights $\sk y_i$. Here, $d_g (\cdot, \cdot)$ is the geodesic distance on $\mathcal{M}$.
\end{definition}

Generally, finding the Karcher mean~\cite{Karcher_1977} on a manifold 
involves an iterative
procedure, which may converge to a local minimum, even on a simple
manifold, such as $SO(3)$ \cite{Manton_2004,Hartley_IJCV_13}. 
However, one may replace the geodesic metric with a different metric 
in order to simplify the calculation. To this end, we propose the
{\em chordal metric} on a Grassman manifold, defined for matrices
$\mh X$ and $\mh Y$ in $\PGRASS{p}{n}$ by
\begin{equation}
\label{eq:chordal-metric}
d_\chord (\mh X, \mh Y) = \| \mh X - \mh Y\|_F ~.
\end{equation} 
The corresponding mean, as in definition~\ref{def:Karcher-mean} (but 
using the chordal metric) is called the {\em weighted chordal mean}
of the points.  In contrast to the Karcher mean, the weighted
chordal mean on a Grassman manifold has a simple closed-form.
\begin{theorem}
\label{thm:chordal-mean}
The  weighted chordal mean of a set of points $\mh D_i \in\PGRASS{p}{d}$ with weights $\sk y_i$
is equal to 
\(
{\rm Proj} (\sum_{i=1}^m \sk y_i \mh D_i),
\)
where ${\rm Proj}(\cdot)$ represents the closest point on
$\PGRASS{p}{d}$.
\end{theorem}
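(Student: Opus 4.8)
The plan is to turn the weighted chordal mean into a single nearest-point problem in $\mathrm{Sym}(d)$, using the one structural fact that distinguishes the projection model: every point of $\PGRASS{p}{d}$ has the same Frobenius norm. Concretely, if $\mh X=\m X\m X^T$ with $\m X^T\m X=\mathbf I_p$, then $\|\mh X\|_F^2=\tr(\m X\m X^T\m X\m X^T)=\tr(\m X^T\m X)=p$; equivalently, $\mh X$ is a rank-$p$ orthogonal projector, so its eigenvalues are $p$ ones and $d-p$ zeros. Thus $\PGRASS{p}{d}$ lies on a sphere of radius $\sqrt p$ inside $\mathrm{Sym}(d)$, which is precisely the property that fails on a generic submanifold and is why the Karcher mean has no closed form while the chordal mean will.

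First I would write down the function to be minimised. By Definition~\ref{def:Karcher-mean} applied with the chordal metric, the weighted chordal mean is the minimiser over $\mh X\in\PGRASS{p}{d}$ of $g(\mh X)=\sum_{i=1}^m\sk y_i\,d_\chord(\mh X,\mh D_i)^2=\sum_{i=1}^m\sk y_i\,\|\mh X-\mh D_i\|_F^2$. The next step is a one-line expansion: writing $\|\mh X-\mh D_i\|_F^2=\|\mh X\|_F^2-2\lip\mh X,\mh D_i\rip+\|\mh D_i\|_F^2$ and substituting $\|\mh X\|_F^2=\|\mh D_i\|_F^2=p$ makes every term except the cross term constant in $\mh X$, so $g(\mh X)=c-2\lip\mh X,\ \sum_{i=1}^m\sk y_i\,\mh D_i\rip$ with $c$ independent of $\mh X$. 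Completing the square once more — again using $\|\mh X\|_F^2=p$ — rewrites this as $g(\mh X)=\big\|\mh X-\sum_{i=1}^m\sk y_i\,\mh D_i\big\|_F^2+c'$ with $c'$ also independent of $\mh X$.

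From here the conclusion is immediate: minimising $g$ over $\mh X\in\PGRASS{p}{d}$ is identical to minimising $\big\|\mh X-\sum_i\sk y_i\,\mh D_i\big\|_F$ over that set, i.e.\ to finding the point of $\PGRASS{p}{d}$ nearest (in Frobenius norm) to $\sum_i\sk y_i\,\mh D_i$, which is ${\rm Proj}\!\big(\sum_{i=1}^m\sk y_i\,\mh D_i\big)$ by definition. This also shows, incidentally, that no constraint such as $\Vec 1^T\Vec y=1$ is required for the statement, since the weight sum only enters the additive constant $c$.

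I do not expect a genuine obstacle: the whole argument is the short computation above. The two points that deserve care, and that I would state explicitly, are (i) that the ``collapse to a constant'' really hinges on $\|\mh X\|_F^2$ being constant over the \emph{whole} manifold — this is where the argument would break on an arbitrary Riemannian submanifold, and it is also why the Lagrange-multiplier route used for the Euclidean lemma is unnecessary here; and (ii) that ${\rm Proj}(\cdot)$ is well defined: the nearest point of $\PGRASS{p}{d}$ to a symmetric matrix $\m M$ is $\m U_p\m U_p^T$, where $\m U_p$ collects $p$ leading orthonormal eigenvectors of $\m M$, and it is unique precisely when the $p$-th and $(p+1)$-th eigenvalues of $\m M$ are distinct; in the tied case any maximiser of $\lip\mh X,\m M\rip$ over $\PGRASS{p}{d}$ may be selected, and all of them minimise $g$ equally.
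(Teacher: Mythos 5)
Your proof is correct and follows essentially the same route as the paper's: both exploit the fact that $\|\mh X\|_F^2=p$ is constant on $\PGRASS{p}{d}$ to reduce the weighted sum of squared chordal distances to maximizing $\lip\mh X,\sum_i\sk y_i\mh D_i\rip$, and then identify that maximizer as the nearest point to the weighted sum (the paper invokes its Theorem~\ref{thm:closest-point-on-Grassman} for this last equivalence, where you complete the square explicitly). Your added observations — that no normalization of the weights is needed and that uniqueness of ${\rm Proj}$ requires a gap between the $p$-th and $(p+1)$-th eigenvalues — are correct refinements the paper leaves implicit.
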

The function ${\rm Proj}(\cdot)$ has a closed form solution 
in terms of the Singular Value Decomposition.  For proofs of
these results, see the proofs of Theorems \ref{thm:closest-point-on-Grassman} and \ref{thm:chordal-mean} in the appendix.

The chordal metric on a Grassman manifold is not a geodesic metric
(that is it is not equal to the length of a shortest geodesic under the
Riemannian metric). However, it is closely related.  In fact, one may
easily show that for $\GRASS{p}{d} \ni \mathcal{X} = \mathrm{span}(\Mat{X})$ and
$\GRASS{p}{d} \ni \mathcal{Y} = \mathrm{span}(\Mat{Y})$
\[
\frac{2}{\pi}\, d_\geo(\mathcal{X}, \mathcal{Y}) \le d_\chord (\mh X, \mh Y) \le d_\geo(\mathcal{X}, \mathcal{Y}) ~.
\]
Furthermore, the path-metric (\cite{Hartley_IJCV_13}) induced by $d_\chord(\cdot, \cdot)$ is
equal to the geodesic distance.


\subsection{Sparse Coding}
\label{sec:sub_sc_grass}

Given a dictionary $\mathbb{D}$
with atoms $\mh D_j\in \PGRASS{p}{d}$ and a query sample 
$\mh X$
the problem of sparse coding can be recast extrinsically as (see
Fig.~\ref{fig:gsc_conceptual} for a conceptual illustration):

\begin{equation}
	l(\mathcal{X},\mathbb{D}) \triangleq
	\underset{\Vec{y}}{\min} \Big\|\mh X - \sum_{j=1}^N \sk y_j
\mh D_j  \Big\|_F^2 + \lambda \|\Vec{y}\|_1.
	\label{eqn:sc_embedded_grass}
\end{equation}

The formulation here varies slightly from the general form given in~\eqref{eqn:grass_coding}, in that the point $\sum_{j=1}^N \sk y_j \mh D_j$ 
does not lie exactly on the manifold $\PGRASS{p}{d}$,
since  
it is not idempotent nor its rank
is necessarily $p$. We call this solution an {\em extrinsic 
solution}; the point coded by the dictionary is
allowed to step out of the manifold.  There is no reason to 
see this as a major flaw, as will be discussed later in 
Section~\ref{subsec:extrinsic}.  

Expanding the Frobenius norm term in~\eqref{eqn:sc_embedded_grass}
results in a convex function in $\Vec{y}$: 

\RIH{
\begin{align*}
  &
    \Bigl\| \Mat{X}\Mat{X}^T -    \sum_{j=1}^{N} \sk y_j
    \Mat{D}_j\Mat{D}_j^T \Bigr\|_F^2  
    = \tr(\Mat{X}^T\Mat{X}\Mat{X}^T\Mat{X})   +  \nonumber\\
	&  \!  \sum_{j,r =1}^{N}{\sk y_j \sk y_r 
	\tr(\Mat{D}_r^T\Mat{D}_j\Mat{D}_j^T\Mat{D}_r)}                          
	-2\sum_{j=1}^{N}{\sk y_j \tr(\Mat{D}_j^T\Mat{X}\Mat{X}^T\Mat{D}_j)}.     
\end{align*}%
}{
\begin{align*}
\Bigl\| \mh X - \sum_{j=1}^{N} \sk y_j \mh D_j \Bigr\|_F^2  
    & = \|\mh X\|_F^2   +  
	 \Bigl\| \sum_{j=1}^{N} \sk y_j \mh D_j \Bigr\|_F^2                        
	-2 \lip \sum_{j=1}^{N} \sk y_j \mh D_j, \mh X \rip ~.    
\end{align*}%
}

The sparse codes can be obtained without explicit embedding of the manifold to 
$\PGRASS{p}{d}$ using $\Pi(\mathcal{X})$. This can be seen by defining
\RIH{ 
$[\mathcal{K}(\Mat{X},\mathbb{D})]_i = \tr(\Mat{D}_i^T\Mat{X}\Mat{X}^T
\Mat{D}_i) = \|\Mat{X}^T\Mat{D}_i\|_F^2$
}
{
$[\mathcal{K}(\Mat{X},\mathbb{D})]_i = \lip \mh X, \mh D_i\rip$
}
as an $N$ dimensional vector storing  the similarity between signal $\Mat{X}$ 
and dictionary atoms in the induced space and
\RIH
{
$[\mathbb{K}(\mathbb{D})]_{i,j} =
 \tr(\Mat{D}_i^T\Mat{D}_j\Mat{D}_j^T\Mat{D}_i) = \|\Mat{D}_i^T\Mat{D}_j\|_F^2$
}{
$[\mathbb{K}(\mathbb{D})]_{i,j} =
 \lip \mh D_i, \mh D_j \rip$
}
as an {$N \times N$} symmetric matrix encoding the similarities between
dictionary atoms (which can be computed offline).
Then, the sparse coding in~\eqref{eqn:sc_embedded_grass} can be
written as:
\begin{equation}
	l(\mathcal{X},\mathbb{D}) = \underset{\Vec{y}}{\min} \:
	\Vec{y}^T \mathbb{K}(\mathbb{D}) \Vec{y}
-2\Vec{y}^T\mathcal{K}(\Mat{X},\mathbb{D})
	+\lambda \|\Vec{y}\|_1\;. 
	\label{eqn:Opt_Grass3}
\end{equation}
The symmetric matrix $\mathbb{K}(\mathbb{D})$ is positive semidefinite since 
for all $\Vec{v} \in \R^N$:
\RIH{
\begin{align*}
	\Vec{v}^T\mathbb{K}(\mathbb{D})\Vec{v} &= \sum_{i=1}^{N}\sum_{j=1}^{N}
v_iv_j\tr(\Mat{D}_i^T\Mat{D}_j\Mat{D}_j^T\Mat{D}_i)\\
	&= \tr \Big(\sum_{i=1}^{N}\sum_{j=1}^{N} v_i
v_j\Mat{D}_i\Mat{D}_i^T\Mat{D}_j\Mat{D}_j^T \Big)\\	
	&= \tr \Big(\sum_{i=1}^{N} v_i \Mat{D}_i\Mat{D}_i^T \sum_{j=1}^{N} v_j
\Mat{D}_j\Mat{D}_j^T\Big)\\ 
	&= \Big\|\sum_{i=1}^{N} v_i \Mat{D}_i\Mat{D}_i^T\Big\|_F^2 \geq 0.
\end{align*}%
}{
\begin{align*}
	\Vec{v}^T\mathbb{K}(\mathbb{D})\Vec{v} &= \sum_{i=1}^{N}\sum_{j=1}^{N}
v_iv_j\lip \mh D_i, \mh D_j \rip 
	= \left< \sum_{i=1}^{N} v_i \mh D_i,\,\, \sum_{j=1}^{N} v_j \mh D_j \right> \\
	&= \Big\|\sum_{i=1}^{N} v_i \mh D_i\Big\|_F^2 \geq 0.
\end{align*}%
} 
Therefore, the  problem is convex and can be efficiently solved using
common packages like CVX~\cite{CVX1,CVX2} or SPAMS~\cite{Mairal_JMLR_2010}.
The problem in~\eqref{eqn:Opt_Grass3} can be transposed into a
vectorized sparse coding problem.
More specifically, let $\Mat{U}\Mat{\Sigma}\Mat{U}^T$ be the SVD of
$\mathbb{K}(\mathbb{D})$.
Then~\eqref{eqn:Opt_Grass3} is equivalent to 

\begin{equation}
	l(\mathcal{X},\mathbb{D}) = \underset{\Vec{y}}{\min} \:
	\Vert \Vec{x}^\ast - \Mat{A}\Vec{y}\Vert^2 +\lambda \|\Vec{y}\|_1, 
	\label{eqn:Opt_Grass4}
\end{equation} 
where $\Mat{A} = \Mat{\Sigma}^{1/2}\Mat{U}^T$ and $\Vec{x}^\ast =
\Mat{\Sigma}^{-1/2}\Mat{U}^T\mathcal{K}(\Mat{X},\mathbb{D})$.
This can be easily verified by plugging $\Mat{A}$ and $\Vec{x}^\ast$ into~\eqref{eqn:Opt_Grass4}. 
Algorithm~\ref{alg:alg_sc} provides the pseudo-code for performing
Grassmann Sparse Coding (gSC).

\begin{algorithm}[!tb]\small
	\SetKwBlock{KwInit}{Initialization.}{}
	\SetKwBlock{KwProc}{Processing.}{}
	\SetAlgoLined	
	\setcounter{AlgoLine}{1}	
	\KwIn
	{Grassmann dictionary $\{ \mathcal{D}_i
\}_{i=1}^N,~\mathcal{D}_i~\in~\GRASS{p}{d}$
	with $\mathcal{D}_i = \mathrm{span}(\Mat{D}_i)$; 
	the query $\GRASS{p}{d} \ni \mathcal{X} = \mathrm{span}(\Mat{X})$
	}
	\KwOut
	{The sparse code {$\Vec{y}^\ast$}
	}
	\BlankLine
	\KwInit{
	\For{$i,j \gets 1$ \KwTo $N$}{
	\mbox{$[\mathbb{K}(\mathbb{D})]_{i,j} \gets \big\|\Mat{D}_i^T\Mat{D}_j
\big\|_F^2$}
	}
	\mbox{$\mathbb{K}(\mathbb{D}) = \Mat{U}\Mat{\Sigma}\Mat{U}^T$}
\tcc{compute SVD of $\mathbb{K}(\mathbb{D})$}
	\mbox{$\Mat{A} \gets \Mat{\Sigma}^{1/2}\Mat{U}^T$}			
	}
	\KwProc{
	\For{$i \gets 1$ \KwTo $N$}{
	\mbox{$[\mathcal{K}(\Mat{X},\mathbb{D})]_i \gets \big\|\Mat{X}^T\Mat{D}_i
\big\|_F^2$}
	}
	\mbox{$\Vec{x}^\ast \gets \Mat{\Sigma}^{-1/2}\Mat{U}^T\mathcal{K}(\Mat{X},
\mathbb{D})$}\\
	$\Vec{y}^\ast \gets \underset{\Vec{y}}{\arg\min} \:	\Vert \Vec{x}^\ast -
\Mat{A}\Vec{y}\Vert^2 +\lambda \|\Vec{y}\|_1$
	}
	\caption{\small Sparse coding on Grassmann manifolds (gSC).}
	\label{alg:alg_sc}
\end{algorithm}

A special case is sparse coding on the Grassmann manifold $\GRASS{1}{d}$, 
which can be seen as a problem on $d-1$ dimensional unit sphere, albeit
with a subtle difference.
More specifically, unlike conventional sparse coding in vector spaces, 
$\Vec{x} \sim -\Vec{x}, \forall \Vec{x} \in \GRASS{1}{d}$,
which results in having antipodals points being equivalent. 
For this special case, the solution proposed in~\eqref{eqn:sc_embedded_grass}
can be understood as sparse coding in the higher dimensional quadratic space,
\ie, $f:\R^d \to \R^{d^2}, f(\Vec{x}) =
[x_1^2,x_1x_2,\cdots,x_d^2]^T$. 
We note that in the quadratic space, $\|f(\Vec{x})\| = 1$ and $f(\Vec{x}) =
f(-\Vec{x})$.

\def \GSC_CONCEPT_SIZE {0.9}
\begin{figure}[!tb]
  \centering
  \begin{subfigure}[b]{0.45 \columnwidth}
  	\centering
  	\includegraphics[width=\GSC_CONCEPT_SIZE
\columnwidth,keepaspectratio]{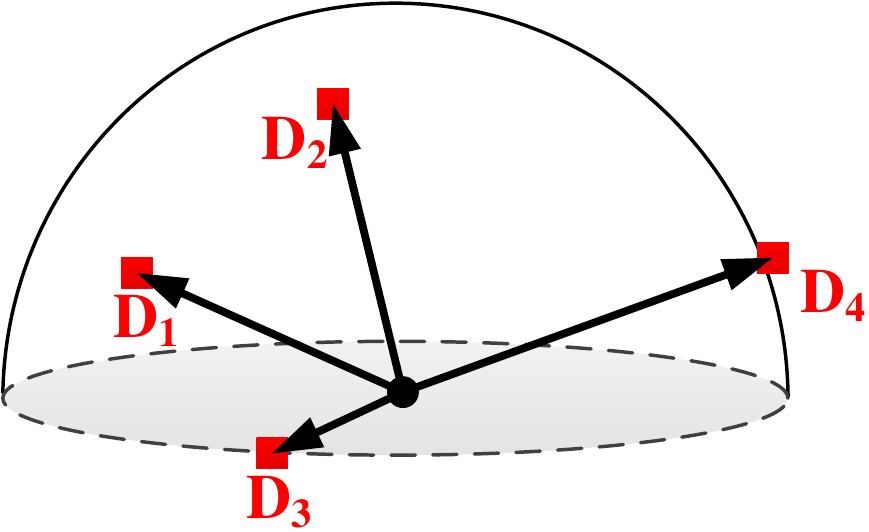} 
  	\caption{}
  \end{subfigure} %
  ~ %
  \begin{subfigure}[b]{0.45 \columnwidth}
  	\centering
  	\includegraphics[width=1.1 \columnwidth,keepaspectratio]{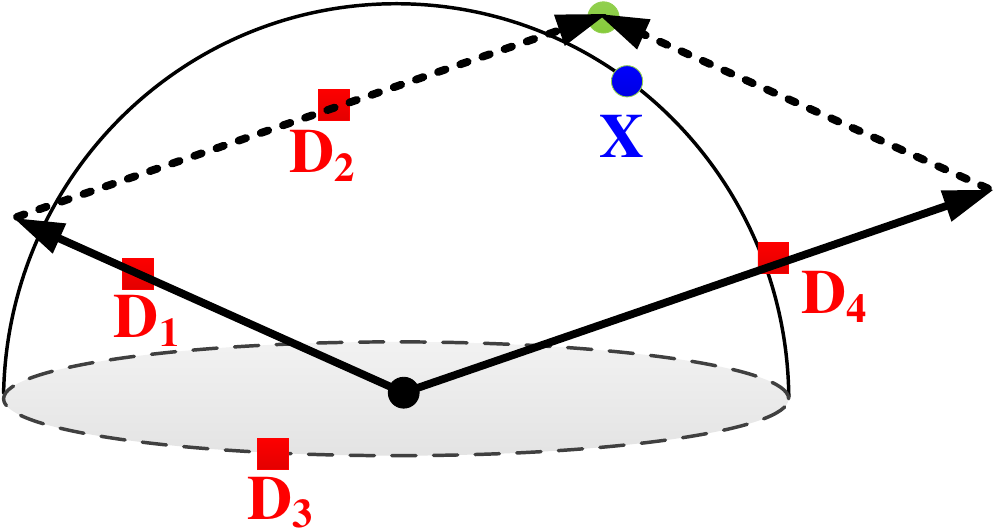} 
  	\caption{}
  \end{subfigure} %
  \caption
    { \small A conceptual diagram of the extrinsic sparse coding addressed
in this work. 
    The hemisphere is being used to represent $\PGRASS{p}{d}$.  Each point on the surface of
    the hemisphere is intended to be a Grassmannian point represented by a
symmetric, idempotent and rank $p$ matrix.   
    {\bf (a)}~.  A Grassmann dictionary on $\PGRASS{p}{d}$ with  four
atoms (red squares).
    {\bf (b)}~. Sparsely describing a query point shown by a blue circle
using dictionary atoms. Here, the combination
    of atoms (green circle) could step out of $\PGRASS{p}{d}$. Having an
 overcomplete dictionary (enough atoms), 
    it is possible to get arbitrarily close to the manifold. This is in
spirit similar to sparse coding in vector spaces.
    More specifically, for a unit norm vector $\Vec{x} \in \R^d$
and a dictionary $\mathbb{D} = \{\Vec{d}_i\}$ with 
    unit norm atoms, the result of sparse coding might be outside the unit
norm sphere in $\R^d$.
    }
  \label{fig:gsc_conceptual}
\end{figure}

\subsection{Locality-Constrained Coding}
\label{sec:sub_llc_grass}

Several studies favor locality  in coding process as
locality could lead to sparsity but not necessarily vice
versa~\cite{Roweis_2000_LLE,Yu:NIPS:2009,Wang_CVPR_2010_LLC}.
In what follows, we describe a coding scheme based on neighborhood
information. We show that the codes with local constraints can be
obtained in closed-form which in turn avoids convex optimization problems
as required for sparse coding. However, there is no free lunch 
here since the new algorithm requires a new parameter, namely the number of
nearest neighbors, to generate the
codes. 

In vector spaces, a fast version of Locality-constrained Linear Coding
(LLC) as proposed by~\citet{Wang_CVPR_2010_LLC}
is described by:

\begin{align}
	&\underset{\Vec{y}}{\min} \,\Vert\Vec{x} -  \Mat{B}\Vec{y}  \Vert^2
\label{eqn:llc_orig}\\
	&\rm{s.t.}~~~\Vec{1}^T\Vec{y} = 1.\nonumber	
\end{align}

In~\eqref{eqn:llc_orig},  $\Vec{x} \in \R^d$ is the query,
$\Mat{B} \in \R^{d \times N_{LC}}$ is a local basis obtained by
simply stacking the $N_{LC}$ nearest neighbors of $\Vec{x}$ from a global
dictionary $\mathbb{D} \in \R^{d \times N}$
and $\Vec{y}$ is the $N_{LC}$ dimensional LLC vector. Recasting the LLC
problem depicted in~\eqref{eqn:llc_orig}
to Grassmann manifolds using the mapping $\Pi(\mathcal{\cdot})$, we obtain:
   
\begin{align}
\begin{split}
\label{eqn:llc_embedded_grass}
	&\underset{\Vec{y}}{\min} \,\Big\|\mh X - \sum_{j=1}^{N_{LC}}
\sk y_j \mh B_j \Big\|_F^2	\\	
	&\rm{s.t.}~~~\Vec{1}^T_{}\Vec{y} = 1.
\end{split}
\end{align}

Observing the constraint
$\Vec{1}^T\Vec{y} = 1$, we may write  

\begin{align}
\begin{split}
	\Big\|\mh X - \sum_{j=1}^{N_{LC}} \sk y_j
\mh B_j \Big\|_F^2 
&= \Big\|\sum_{j=1}^{N_{LC}} \sk y_j (\mh X -\mh B_j) \Big\|_F^2 \\
&= \left< \sum_{i=1}^{N_{LC}}\sk y_i (\mh X - \mh B_i), \, \sum_{j=1}^{N_{LC}}\sk y_j (\mh X - \mh B_j) \right> \\
&= 
\sum_{i, j=1}^{N_{LC}}\sk y_i \sk y_j\left<  \mh X - \mh B_i, \, \mh X - \mh B_j \right> \\
&=	\Vec{y}^T\mathbb{B}\,\Vec{y}
\end{split}
\end{align}
where the elements of matrix  $\mathbb{B}$ are
\begin{align}
\begin{split}
[\mathbb{B}]_{i,j} &= \left<  \mh X - \mh B_i, \, \mh X - \mh B_j \right> \\ 
&= p-\big\|\Mat{X}^T\Mat{B}_i\big\|_F^2-\big\|\Mat{X}^T
\Mat{B}_j\big\|_F^2+\big\|\Mat{B}_j^T\Mat{B}_i\big\|_F^2\;.
\label{eqn:B_gLC}
\end{split}
\end{align}
Then, the minimum in (\ref{eqn:llc_embedded_grass}) may be found 
by solving $\mathbb{B}\hat{\Vec{y}} =
\Vec{1}$, and then rescaling $\hat{\Vec{y}}$ so that it sums to one.
Algorithm~\ref{alg:alg_llc} provides the pseudo-code for performing
Grassmann Locality-constrained Coding (gLC).

A similar formulation albeit intrinsic, for the purpose of nonlinear
embedding of Riemannian manifolds is developed 
by~\citet{Goh_CVPR_2008}. Aside from the different purpose (coding versus embedding), gLC
can exploit an additional codebook learning step (as explained in
\textsection~\ref{sec:dic_learning})
while dictionary learning based on the intrinsic formulation has no
analytic solution.

\def \GLC_CONCEPT_SIZE {0.9}
\begin{figure}[!tb]
  \centering
  \begin{subfigure}[b]{0.45 \columnwidth}
  	\centering
  	\includegraphics[width=\GLC_CONCEPT_SIZE
\columnwidth,keepaspectratio]{concept_gsc1.pdf} 
  	\caption{}
  \end{subfigure} %
  ~ %
  \begin{subfigure}[b]{0.45 \columnwidth}
  	\centering
  	\includegraphics[width=\GLC_CONCEPT_SIZE
\columnwidth,keepaspectratio]{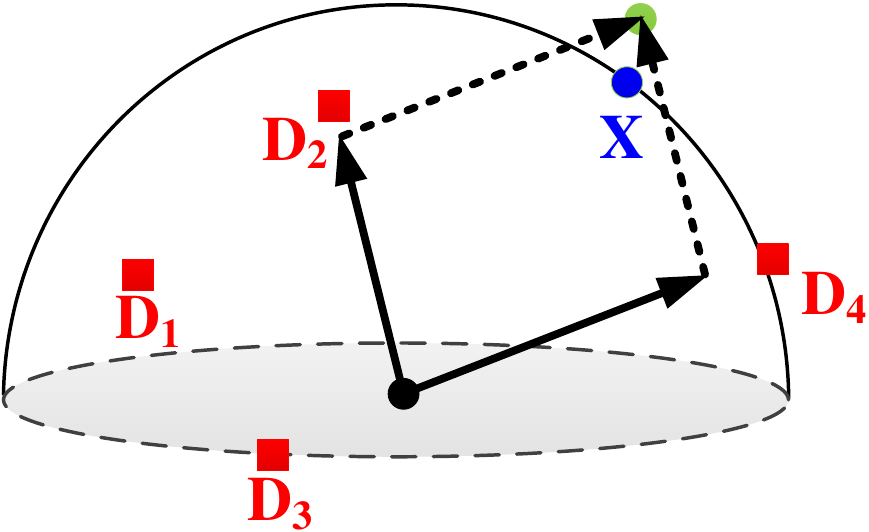} 
  	\caption{}
  \end{subfigure} %
  \caption
    { \small A conceptual diagram of the extrinsic locality constrained
coding on Grassmann manifolds.    
    The hemisphere is being used to represent 
$\PGRASS{p}{d}$.  Each point on the surface of
    the hemisphere is intended to be a Grassmannian point represented by a
symmetric, idempotent and rank $p$ matrix.   
    {\bf (a)}~.  A Grassmann dictionary on $\PGRASS{p}{d}$ with  four
atoms (red squares).
    {\bf (b)}~. Locality constrained coding to describe a query point
$\Mat{X} \in \PGRASS{p}{d}$. 
    Here, only the closest atoms to the query point contribute in coding.
With enough neighbors, 
    it is possible to get arbitrarily close to the manifold.     
    }
  \label{fig:lsc_conceptual}
\end{figure}

\begin{algorithm}[!tb]\small
	\SetKwBlock{KwInit}{Initialization.}{}
	\SetKwBlock{KwProc}{Processing.}{}
	\SetAlgoLined	
	\setcounter{AlgoLine}{1}
	\KwIn	
	{Grassmann dictionary $\{ \mathcal{D}_i
\}_{i=1}^N,~\mathcal{D}_i~\in~\GRASS{p}{d}$
	with $\mathcal{D}_i = \mathrm{span}(\Mat{D}_i)$; 
	the query $\GRASS{p}{d} \ni \mathcal{X} = \mathrm{span}(\Mat{X})$
	}
	\KwOut
	{The gLC {$\Vec{y}^\ast$}
	}
	\BlankLine
	\KwProc{
	\For{$i \gets 1$ \KwTo $N$}{
	\mbox{$\delta_i \gets 2p - 2\big\|\Mat{D}_i^T\Mat{X} \big\|_F^2$}
	}
	\mbox{$\rm{active\_set} \gets$ indexes of the $N_{LC}$ smallest $\delta_i, 1
\leq i \leq N$}\\
	\For{$i \gets 1$ \KwTo $N_{LC}$}{
	\mbox{$\Mat{B}_i \gets \Mat{D}_{\rm{active\_set}(i)}$}
	}
	\For{$i,j \gets 1$ \KwTo $N_{LC}$}{
	\mbox{$[\mathbb{B}]_{i,j} \gets p-\Big\|\Mat{X}^T\Mat{B}_i\Big\|_F^2-
	\Big\|\Mat{X}^T\Mat{B}_j\Big\|_F^2+\Big\|\Mat{B}_j^T\Mat{B}_i\Big\|_F^2$}
	}			
	\mbox{Solve the linear equation system $\mathbb{B}\hat{\Vec{y}} = \Vec{1}$}\\
	\mbox{$\hat{\Vec{y}} \gets \hat{\Vec{y}}/\Vec{1}^T\hat{\Vec{y}}$}\\
	$\Vec{y}^\ast(\rm{active\_set}) \gets \hat{\Vec{y}}$ 	
	}
	\caption{\small Locality-constrained coding on Grassmann manifolds (gLC).}
	\label{alg:alg_llc}
\end{algorithm} 


\subsection{Classification Based on Coding}
\label{sec:sparse_classification}

If the atoms in the dictionary are not labeled
(\eg, if $\mathbb{D}$  is a generic dictionary not tied to any particular
class),
the generated sparse codes (vectors) for both training and query data can
be fed to Euclidean-based classifiers
like support vector machines~\cite{Shawe-Taylor:2004:KMP} for classification.
{
Inspired by the Sparse Representation Classifier (SRC)~\cite{Wright:PAMI:2009},
when the atoms in sparse dictionary $\mathbb{D}$ are labeled, the generated
codes of the query sample can be directly used for classification.
In doing so, let}
\begin{equation*}
\Vec{y}_c = \begin{pmatrix}
\sk y_0\delta \big( l_0-c \big)\\
\sk y_1\delta \big( l_1-c \big)\\
\vdots\\
\sk y_N\delta \big( l_N-c \big)
\end{pmatrix}
\end{equation*}
be the class-specific sparse codes, where $l_j$ is the class label of
atom~$\GRASS{p}{d} \ni \mathcal{D}_j = \mathrm{span}(\Mat{D}_j)$
and $\delta(x)$ is the discrete Dirac function.
An efficient way of utilizing class-specific sparse codes is through
computing residual errors.
In this case, the residual error of query sample $\GRASS{p}{d} \ni
\mathcal{X} = \mathrm{span}(\Mat{X})$ for class $c$ is defined as:

\begin{equation}
    \varepsilon_c(\mathcal{X})= \Big\| \mh X -
\sum\limits_{j=1}^{N} \sk y_j \mh D_j 
    \delta\big( l_j-c \big) \Big\|_F^2\;.
    \label{eqn:sparse_classification}
\end{equation}%

Alternatively, the similarity between query sample $\mathcal{X}$ to class $c$
can be defined as $s(\mathcal{X},c)= h(\Vec{y}_c)$.
The function $h(\cdot)$ could be a linear function like
$\sum\nolimits_{j=1}^{N}\left(\cdot\right)$
or even a non-linear one like $\max\left(\cdot\right)$.
Preliminary experiments suggest that Eq.~\eqref{eqn:sparse_classification}
leads to higher classification accuracies
when compared to the aforementioned alternatives.

\subsection{Extrinsic Nature of the Solution}
\label{subsec:extrinsic}

The solutions proposed in this section (\eg,~\eqref{eqn:sc_embedded_grass}) 
apply to points in $\PGRASS{p}{d}$ and solve coding
extrinsically,
meaning $\sum_{j=1}^N \sk y_{j} \mh D_j$
is not necessarily a point on $\GRASS{p}{d}$.
If, however, it is required that the linear combination of elements
$\sum_j \sk y_j \mh D_j$
actually be used to represent a point on the manifold,
then this can be found by projecting to the closest point
on $\PGRASS{p}{d}$ using Theorem~\ref{thm:chordal-mean} (see appendix).  As shown there, the
resulting point is the weighted chordal mean of the dictionary
atoms $\mh D_j$ with coefficients $[\Vec y]_j$.

Furthermore, the proposed methods follow the general principle of coding in
that the over-completeness of $\mathbb{D}$ will approximate 
$\mh X$, and $\sum_j \sk y_j \mh D_j$ can be
expected to be closely adjacent to a Grassmann point. 
An intrinsic version of~\eqref{eqn:sc_embedded_grass} can be written as:
\begin{equation}
l_{\calG}(\mathcal{X},\mathbb{D}) \triangleq
    \underset{\Vec{y}}{\min} \:
    \Bigl\| \mh X-
    \mathrm{Proj}\Big(\sum_{j=1}^{N} \sk y_{j}
\mh D_j\Big) \Bigr\|_F^2
    +\lambda \|\Vec{y}\|_1, 
    \label{eqn:rene_equ}
\end{equation}%
where $\mathrm{Proj}(\sum_{j=1}^{N} \sk y_{j}
\mh D_j)$ is the weighted chordal mean of the dictionary
atoms, as shown by Theorem \ref{thm:chordal-mean}.  This formulation is precisely
of the form~\eqref{eqn:grass_coding}, where the coding is the weighted 
chordal mean.
The involvement of SVD makes solving~\eqref{eqn:rene_equ} challenging.  
While seeking efficient ways of solving~\eqref{eqn:rene_equ}
is interesting, it is beyond the scope of this work.
The coding error given by~\eqref{eqn:rene_equ} and~\eqref{eqn:sc_embedded_grass} will normally be very close, 
making~\eqref{eqn:sc_embedded_grass} an efficient compromise solution.


\section{Dictionary Learning}
\label{sec:dic_learning}

Given a finite set of observations 
$\mathbb{X} = \left \{ \mathcal{X}_i  \right \}_{i=1}^{m}, \: \GRASS{p}{d}
\ni \mathcal{X}_i =\mathrm{span}(\Mat{X}_i)$,
the problem of dictionary learning on Grassmann manifolds is defined as
minimizing the following cost function:
\begin{equation}
	h(\mathbb{D}) \triangleq \sum\limits_{i=1}^{m}
l_\calG(\mathcal{X}_i,\mathbb{D}),
    \label{eqn:grass_dic_learning}
\end{equation}
with $\mathbb{D} = \left \{ \mathcal{D}_j  \right \}_{j=1}^{N}, \:
\GRASS{p}{d} \ni \mathcal{D}_j =\mathrm{span}(\Mat{D}_j)$
being a dictionary of size $N$.
Here, $l_\calG(\mathcal{X},\mathbb{D})$ is a loss function
and should be small if $\mathbb{D}$ is ``good'' at representing $\mathcal{X}$.
In the following text, we elaborate on how a Grassmann dictionary can be learned.



Aiming for sparsity, the $ \ell_1 $-norm regularization is usually
employed to obtain the most common form of
$l_\calG(\mathcal{X},\mathbb{D})$ as depicted 
in Eq.~\eqref{eqn:sc_embedded_grass}. 
With this choice, the problem of dictionary learning on Grassmann manifolds
can be written as:
\begin{equation}
\underset{ \{\Vec{y}_i \}_{i=1}^m, \mathbb{D}}{\min} \:
    \sum_{i=1}^{m}\Big\| \mh X_i-
\sum_{j=1}^{N} \sk{y_i}_j \mh D_j \Big\|_F^2
    +\lambda 
    \sum_{i=1}^{m} \|\Vec{y}_i\|_1.
    \label{eqn:dic_learn_grass_orig_equ}
\end{equation}
Due to the non-convexity of~\eqref{eqn:dic_learn_grass_orig_equ} and
inspired by the solutions
in Euclidean spaces, we propose to solve~
\eqref{eqn:dic_learn_grass_orig_equ} by alternating
between the two sets of variables, $\mathbb{D}$ and $\{\Vec{y}_i \}_{i=1}^m$.
More specifically, minimizing~\eqref{eqn:dic_learn_grass_orig_equ} over
sparse codes $\Vec{y}$
while dictionary $\mathbb{D}$ is fixed is a convex problem. Similarly,
minimizing the overall problem
over $\mathbb{D}$ with fixed $\{\Vec{y}_i \}_{i=1}^m$ is convex as well.

Therefore, to update dictionary atoms we break the minimization problem
into $N$ sub-minimization problems by independently updating each atom, $\mh D_r$,
in line with general practice in dictionary learning~\cite{ELAD_SR_BOOK_2010}.
%
%
%
To update $\mh D_r$, we write 
\begin{equation}
\sum_{i=1}^{m}\,\Big\| \mh X_i-
\sum_{j=1}^{N}\, \sk{y_i}_j \mh D_j \Big\|_F^2
    = \sum_{i=1}^{m}\,\Big\|\Big(\mh X_i-\sum_{j\ne r}\, \sk{y_i}_j \mh D_j\Big) -  \sk{y_i}_r \mh D_r \Big\|_F^2 ~.
    \label{eqn:dic_learn_grass_orig_equ_2}
\end{equation}
All other terms in (\ref{eqn:dic_learn_grass_orig_equ}) being 
independent of $\mh D_r$, and 
since $\|\mh D_r\|_F^2 = p$ is fixed, minimizing this with respect to $\mh D_r$
is equivalent to minimizing $\mathcal{J}_r = -2\lip \Mat{S}_r,\, \mh D_r \rip$
where
\begin{align}
\begin{split}
\label{eqn:s_r_dl}
\Mat{S}_r =  
\sum_{i=1}^m \,\sk{y_i}_r\, \Big(   
     \mh X_i - 
    \sum_{j \neq r} \sk{y_i}_j \mh D_j\Big) .
\end{split}
\end{align}
Finally, minimizing $\mathcal{J}_r = -2\lip \Mat{S}_r,\, \mh D_r \rip$
is the same as minimizing $\| \Mat{S}_r - \mh D_r\|$ over
$\mh D_r$ in $\PGRASS{n}{p}$.  The solution to this problem is
given by projecting $\Mat{S}_r$ onto the manifold, using
Theorem \ref{thm:closest-point-on-Grassman} in the appendix.

%
%
Algorithm~\ref{alg:gdl_pseudo_code} details the pseudo-code for learning a
dictionary on Grassmann manifolds.
Fig.~\ref{fig:learnt_atoms_example} shows examples of a ballet dance
and the atoms learned by the proposed method.
From each atom, we plot the dominant eigendirection because it is visually
more informative.
Note that the learned atoms capture the ballerina movements.

To perform coding, we have relaxed the idempotent and rank constraints of
the mapping $\Pi(\cdot)$ since
matrix addition and subtraction do not preserve these constraints.
However, for dictionary learning, the orthogonality constraint ensures the
dictionary atoms have the required structure.
Before concluding this section, we note that dictionary learning for gLC follows verbatim. The only difference from what we have developed
in Algorithm~\ref{alg:gdl_pseudo_code} is the coding step which is done using Algorithm~\ref{alg:alg_llc}.


\def \DICSIZE {0.20}
\begin{figure}[!t]
\begin{minipage}{1\columnwidth}
      \begin{minipage}{0.025\columnwidth}
         \centering
         \small
         {\bf (a)}
       \end{minipage}
      \hfill
      \begin{minipage}{0.95\columnwidth}
      \centering
        \includegraphics[width=\DICSIZE\columnwidth,keepaspectratio]
{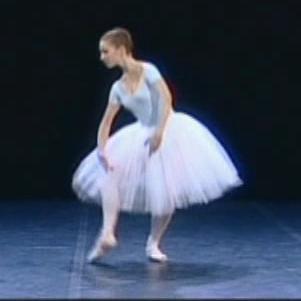}
        \includegraphics[width=\DICSIZE\columnwidth,keepaspectratio]
{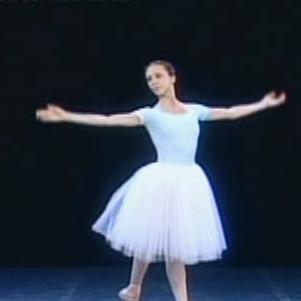}
        \includegraphics[width=\DICSIZE\columnwidth,keepaspectratio]
{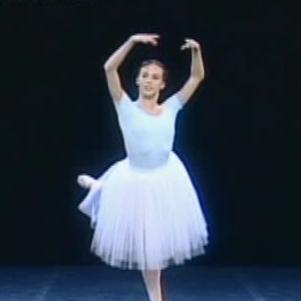}
        \includegraphics[width=\DICSIZE\columnwidth,keepaspectratio]
{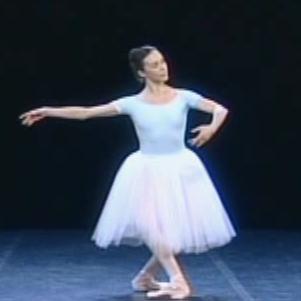}
      \end{minipage}
      \vspace{0.5ex}
      \begin{minipage}{0.025\columnwidth}
         \centering
         \small
         {\bf (b)}
       \end{minipage}
       \hfill
      \begin{minipage}{0.95\columnwidth}
      \centering
        \includegraphics[width=\DICSIZE\columnwidth,keepaspectratio]
{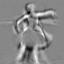}
        \includegraphics[width=\DICSIZE\columnwidth,keepaspectratio]
{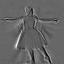}
        \includegraphics[width=\DICSIZE\columnwidth,keepaspectratio]
{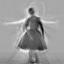}
        \includegraphics[width=\DICSIZE\columnwidth,keepaspectratio]
{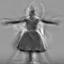}
      \end{minipage}
\end{minipage}
      \vspace{0.5ex}
  \caption
    {
    {\bf (a)}~Examples of actions performed by a ballerina.
    {\bf (b)}~The dominant eigenvectors for four atoms learned by the
proposed Grassmann Dictionary Learning (gDL) method
    (grayscale images were used in gDL).
    }
  \label{fig:learnt_atoms_example}
\end{figure}

\begin{algorithm}[!tb]
\footnotesize
\SetKwBlock{KwInit}{Initialization.}{}
	\SetKwBlock{KwProc}{Processing.}{}
	\SetAlgoLined	
	\setcounter{AlgoLine}{1}
	\KwIn{training set {$\mathbb{X} \mbox{=} \left\{  \mathcal{X}_i
\right\}_{i=1}^{m}$},
	where each {$\GRASS{p}{d} \ni \mathcal{X}_i = \mathrm{span}(\Mat{X}_i)$};
	$nIter$: number of iterations
	}
	\KwOut{
	Grassmann dictionary \mbox{$\mathbb{D}= \left\{
\mathcal{D}_i\right\}_{i=1}^{N}$, where
	 $\GRASS{p}{d} \ni  \mathcal{D}_i = \mathrm{span}(\Mat{D}_i)$}
	}
	\BlankLine
	\KwInit{	
	Initialize the dictionary $\mathbb{D}$ by selecting $N$ samples from
$\mathbb{X}$ randomly
	}
	\KwProc{
	\For{ $t = 1$ \KwTo $nIter$}{
	\tcp{Sparse Coding Step using Algorithm~\ref{alg:alg_sc}}%
	\For{ $i = 1$ \KwTo $m$}{
		\mbox{$\Vec{y}_i \gets 
		\underset{\Vec{y}}{\min} \Big\|\mh X_i - \sum_{j=1}^N\limits
		[\Vec{y}]_j \mh D_j  \Big\|_F^2 + 		\lambda \|\Vec{y}\|_1$}
	}
	\tcp{Dictionary update step}%
	\For{ $r = 1$ \KwTo $N$}{
	\mbox{Compute $\Mat{S}_r$ according to Eq.~\eqref{eqn:s_r_dl}.}\\
	\mbox{$\{{\lambda}_k,{\Vec{v}}_k\} \gets
		\mbox{ eigenvalues and eigenvectors of } \Mat{S}_r$}\\
	\mbox{$\Mat{S}_r\Vec{v} = \lambda \Vec{v}; \lambda_1 \geq \lambda_2 
	\geq \cdots \geq \lambda_d$}\\
	\mbox{$\Mat{D}_r^\ast \gets [\Vec{v}_1|\Vec{v}_2|\cdots|\Vec{v}_p]$}
	}
	}
	}
	\caption{Grassmann Dictionary Learning (gDL)}
	\label{alg:gdl_pseudo_code}
\end{algorithm}


\section{Kernelized Coding and Dictionary Learning on Grassmann Manifolds}
\label{sec:kernel_coding}

In this section, we are interested in coding and dictionary learning on
higher-dimensional (possibly infinite-dimensional) Grassmann manifolds.
Such treatment can be helpful in dealing with non-linearity of data 
since one can hope higher-dimensional manifolds diminish non-linearity.
This follows the practice of using higher dimensional spaces in vector
spaces~\cite{Shawe-Taylor:2004:KMP}.
To this end, we make use of a mapping {$\phi : \R^d \rightarrow
\mathcal{H}$}
from $\R^d$ into a Reproducing Kernel Hilbert Space (RKHS)
$\mathcal{H}$
with a real-valued kernel function $k(\cdot,\cdot)$
on {$\R^d \times \R^d$},
such that {$\forall \Vec{x},\Vec{x}^\prime \in \R^d,\;
\langle \phi(\Vec{x}),\phi(\Vec{x}^\prime)\rangle = \phi(\Vec{x})^T
\phi(\Vec{x}^\prime) = k(\Vec{x},\Vec{x}^\prime)$}
\cite{Shawe-Taylor:2004:KMP}. 

Our goal here is to perform both coding and dictionary learning in
$\mathcal{H}$,
but for efficiency we want to avoid explicitly working in $\mathcal{H}$.
In other words, we would like to obtain the solutions by only using
$k(\cdot,\cdot)$.
In the following text we show how this can be achieved.

\subsection{Kernel Coding}
\label{sec:sub_kernel_sparse_coding}

Let {$\Psi(\Mat{X}) = [\Vec{\psi}_{1}|\Vec{\psi}_{2}|\cdots|\Vec{
\psi}_{p}]$} be an orthonormal basis of order $p$
for the column space of $\Phi(\Mat{X}) = [
\phi(\Vec{x}_1)|\phi(\Vec{x}_2)|\cdots|\phi(\Vec{x}_q)],\; p \leq q$ in
$\mathcal{H}$. 
The $q \times q$ Gram matrix $\Phi(\Mat{X})^T\Phi(\Mat{X})$ whose $i$-th
row and $j$-th column entry is 
$k(\Vec{x}_i,\Vec{x}_j)$ can be decomposed as:
\begin{equation}
	\Phi(\Mat{X})^T \Phi(\Mat{X}) = \Mat{U}_\Mat{X}\Mat{\Sigma}_\Mat{X}
\Mat{U}_\Mat{X}^T.
	\label{eqn:Hilbert1}
\end{equation}
The connection between $\Mat{U}_\Mat{X}$ and $\Mat{\Psi}_\Mat{X}$ is 
a ``trick'' used to compute the principal components of a matrix
that has considerably less columns than rows~\cite{Turk_Eigenfaces},
and can be easily established by picking the $p$ largest singular values of
$\Mat{\Sigma}_\Mat{X}$ 
and corresponding elements of $\Mat{U}_\Mat{X}$ (denoted by $\downarrow$
below) as:
\begin{equation}
	\Psi(\Mat{X}) = \Mat{\Phi}_\Mat{X} \Mat{U}_{\Mat{X}\downarrow}
\Mat{\Sigma}_{\Mat{X}\downarrow}^{-1/2}.
    \label{eqn:Hilbert2}
\end{equation}%

The sparse coding problem on a Grassmann manifold embedded in $\mathcal{H}$ 
can be understood as the kernel version of~\eqref{eqn:sc_embedded_grass}, as
depicted below:
\noindent
\begin{align}
	\underset{\Vec{y}}{\min} \:
    \Bigl \| \mh \Psi(\Mat{X}) -
    \sum_{j=1}^{N} \sk y_{j} \mh \Psi(\Mat{D}_j)
    \Bigr\|_F^2    +\lambda \|\Vec{y}\|_1.
    \label{eqn:KGDL_Grass}
\end{align}%

A similar statement to what we have in \textsection~\ref{sec:sub_sc_grass}
holds here for the convexity of~\eqref{eqn:KGDL_Grass}. 
Therefore, sparse codes can be obtained if the Frobenius inner products between
$\Psi(\Mat{X})$ and elements of the dictionary, \ie, $\lbrace
\Psi({\Mat{D}_i}) \rbrace_{i=1}^{N}$ are known.
Given $\lbrace \Vec{z}_{i} \rbrace_{i=1}^{q_\Mat{Z}},\;  \Vec{z}_{i} \in
\R^d$ and 
$\lbrace \Vec{x}_{i} \rbrace_{i=1}^{q_\Mat{X}},\;  \Vec{x}_{i} \in
\R^d$,
the Frobenius inner product between the corresponding subspaces
$\Psi(\Mat{Z})$ and $\Psi(\Mat{X})$ in $\mathcal{H}$ can be obtained as:
\begin{align}
    \big \langle \Psi(\Mat{Z}), \Psi(\Mat{X}) \big\rangle
    &= \tr \big(\Mat{\Psi}_\Mat{Z}^T \Mat{\Psi}_\Mat{X} \big) \notag\\
    &= \tr \Big( \Mat{\Sigma}_{\Mat{Z}\downarrow}^{-1/2}
\Mat{U}_{\Mat{Z}\downarrow}^T \Phi(\Mat{Z})^T
    \Phi(\Mat{X}) \Mat{U}_{\Mat{X}\downarrow}\Mat{\Sigma}_{\Mat{X}
\downarrow}^{-1/2}
    \Big )\notag\\
    & = \tr \Big (\Mat{\Sigma}_{\Mat{Z}\downarrow}^{-1/2}
\Mat{U}_{\Mat{Z}\downarrow}^T 
    \Mat{K}(\Mat{Z}, \Mat{X}) \Mat{U}_{\Mat{X}\downarrow}\Mat{\Sigma}_{
\Mat{X}\downarrow}^{-1/2}
    \Big),
    \label{eqn:Hilbert3}
\end{align}%
where $\Mat{K}(\Mat{Z}, \Mat{X})$ is a $q_\Mat{Z} \times q_\Mat{X}$ matrix
where
$i$-{th} row and $j$-{th} column entry is $k(\Vec{z}_i,\Vec{x}_j)$.
Therefore, a similar approach to \textsection~\ref{sec:sub_sc_grass} 
can be employed to obtain the sparse codes in~\eqref{eqn:KGDL_Grass}.
Algorithm~\ref{alg:alg_ksc} provides the pseudo-code for performing kernel
sparse coding on Grassmann manifolds 
(kgSC).

\begin{algorithm}[!tb]\small
	\SetKwBlock{KwInit}{Initialization.}{}
	\SetKwBlock{KwProc}{Processing.}{}
	\SetAlgoLined	
	\setcounter{AlgoLine}{1}	
	\KwIn
	{Dictionary $\{ \Mat{D}_i \}_{i=1}^N, \R^{d \times q_i} \ni
\Mat{D}_i = \{ \Vec{d}_{i,j} \}_{j=1}^{q_i}$; 
	the query sample $\Mat{X}\in~\R^{d \times q}$
	}
	\KwOut
	{The sparse code {$\Vec{y}^\ast$}
	}
	\BlankLine
	\KwInit{
	\For{$i \gets 1$ \KwTo $N$}{
	\mbox{$[\Mat{K}(\Mat{D}_i)]_{j,l} \gets k(\Vec{d}_{i,k},\Vec{d}_{i,l})$}\\
	\mbox{$\Mat{K}(\Mat{D}_i) = \Mat{U}_{\Mat{D}_i}\Mat{\Sigma}_{\Mat{D}_i}
\Mat{U}_{\Mat{D}_i}^T$} 
	\tcp*{SVD of $\Mat{K}(\Mat{D}_i)$}	
	}
	\For{$i,j \gets 1$ \KwTo $N$}{
	\mbox{$[\mathbb{K}(\mathbb{D})]_{i,j} \gets
\Big\|\Mat{\Sigma}_{\Mat{D}_j\downarrow}^{-\frac{1}{2}}
\Mat{U}_{\Mat{D}_j\downarrow}^T 
	\Mat{K}({\Mat{D}_j}, \Mat{D}_i) \Mat{U}_{\Mat{D}_i\downarrow}\Mat{
\Sigma}_{\Mat{D}_i\downarrow}^{-\frac{1}{2}} \Big\|_F^2$}	
	}
	\mbox{$\mathbb{K}(\mathbb{D}) = \Mat{U}\Mat{\Sigma}\Mat{U}^T$}
\tcp{compute SVD of $\mathbb{K}(\mathbb{D})$}
	\mbox{$\Mat{A} \gets \Mat{\Sigma}^{1/2}\Mat{U}^T$}
	}
	\KwProc{	
	\mbox{$[\Mat{K}(\Mat{X})]_{i,j} \gets k(\Vec{x}_{i},\Vec{x}_{j})$}\;
	\mbox{$\Mat{K}(\Mat{X}) = \Mat{U}_{\Mat{X}}\Mat{\Sigma}_{\Mat{X}}\Mat{U}_{
\Mat{X}}^T$} 
	\tcp*{SVD of $\Mat{K}(\Mat{X})$}	
	\For{$i \gets 1$ \KwTo $N$}{	
	\mbox{$[\Mat{K}(\Mat{X},\Mat{D}_i)]_{j,l} \gets k(\Vec{x}_j,\Vec{d}_{i,l})$}\\
	\mbox{$[\mathcal{K}(\Mat{X},\mathbb{D})]_i \gets
\Big\|\Mat{\Sigma}_{\Mat{X}\downarrow}^{-\frac{1}{2}}
\Mat{U}_{\Mat{X}\downarrow}^T 
	\Mat{K}(\Mat{X}, \Mat{D}_i) \Mat{U}_{\Mat{D}_i\downarrow}\Mat{\Sigma}_{
\Mat{D}_i\downarrow}^{-\frac{1}{2}} \Big\|_F^2$}	
	}
	\mbox{$\Vec{x}^\ast \gets \Mat{\Sigma}^{-1/2}\Mat{U}^T\mathcal{K}(\Mat{X},
\mathbb{D})$}\\
	$\Vec{y}^\ast \gets \underset{\Vec{y}}{\arg\min} \:	\Vert \Vec{x}^\ast -
\Mat{A}\Vec{y}\Vert^2 +\lambda \|\Vec{y}\|_1$
	}
	\caption{\small Kernel sparse coding on Grassmann manifolds (kgSC).}
	\label{alg:alg_ksc}
\end{algorithm} 


Based on the development in~\textsection\ref{sec:sub_llc_grass},
the kernel version of gLC algorithm on Grassmann manifolds or kgLC for
short can be obtained by
computing the elements of matrix $\mathbb{B}$ in $\mathcal{H}$, \ie,
kernelizing Eq.\eqref{eqn:B_gLC}.
That is,  
%
%
%
\begin{align}
[\mathbb{B}_\mathcal{H}]_{i,j} &= 
p-\Big\|\Psi(\Mat{X})^T\Psi({\Mat{B}_i})\Big\|_F^2-
\Big\|\Psi(\Mat{X})^T\Psi({\Mat{B}_j})\Big\|_F^2+
\Big\|\Psi({\Mat{B}_j})^T\Psi({\Mat{B}_i})\Big\|_F^2 \notag\\
&= p - 
\Big\|\Mat{\Sigma}_{\Mat{B}_i\downarrow}^{-1/2}
\Mat{U}_{\Mat{B}_i\downarrow}^T 
\Mat{K}(\Mat{B}_i, \Mat{X}) \Mat{U}_{\Mat{X}\downarrow}\Mat{\Sigma}_{
\Mat{X}\downarrow}^{-1/2}\Big\|_F^2
\notag\\&\hspace{4ex}-
\Big\|\Mat{\Sigma}_{\Mat{B}_j\downarrow}^{-1/2}
\Mat{U}_{\Mat{B}_j\downarrow}^T 
\Mat{K}(\Mat{B}_j, \Mat{X}) \Mat{U}_{\Mat{X}\downarrow}\Mat{\Sigma}_{
\Mat{X}\downarrow}^{-1/2}\Big\|_F^2
\notag\\&\hspace{4ex}+
\Big\|\Mat{\Sigma}_{\Mat{B}_i\downarrow}^{-1/2}
\Mat{U}_{\Mat{B}_i\downarrow}^T \Mat{K}(\Mat{B}_i, \Mat{B}_j) 
\Mat{U}_{\Mat{B}_j\downarrow}\Mat{\Sigma}_{\Mat{B}_j\downarrow}^{-1/2}\Big
\|_F^2.
\label{eqn:llc_kernel_sol}
\end{align}

Once we have $\mathbb{B}_\mathcal{H}$ at our disposal, the codes are
obtained by 
first solving the linear system of equations,
$\mathbb{B}_{\mathcal{H}}\hat{\Vec{y}}~=~\Vec{1}$,
and then rescaling the result to have unit $\ell_1$ norm.

\subsection{Kernel Dictionary Learning}
\label{sec:kernel_dic_learning_for_sc}

Given a finite set of observations 
$\mathbb{X} = \left \{ \Mat{X}_i  \right \}_{i=1}^{m}, \: \R^{d \times q_i} \ni \Mat{X}_i = \{\Vec{x}_{i,j}\}_{j=1}^{q_i}$,
the problem of kernel dictionary learning in the RKHS $\mathcal{H}$ 
can be written by kernelizing~\eqref{eqn:dic_learn_grass_orig_equ} as:

\begin{align}
\underset{ \{\Vec{y}_i \}_{i=1}^m, \mathbb{D}}{\min} \:
    &\sum\limits_{i=1}^{m}\bigg\| \mh \Psi(\Mat{X}_i) - \sum\limits_{j=1}^{N} 
    [\Vec{y}_i]_j \mh \Psi(\Mat{D}_j) \bigg\|_F^2
    +\lambda     \sum\limits_{i=1}^{m} \|\Vec{y}_i\|_1.
    \label{eqn:kernel_dic_learn_grass_orig_equ}
\end{align}

Similar to the linear case, the dictionary in $\mathcal{H}$ 
is updated atom by atom (\ie, atoms are assumed to be independent) by 
fixing the codes $\{\Vec{y}_i\}_{i=1}^m$.
First, we note that a basis in $\mathcal{H}$ can be written as a linear combination of 
its samples. For $\Psi(\Mat{X})$ and as shown in Eq.~\eqref{eqn:Hilbert2}, 
$\Psi(\Mat{X}) = \Phi(\Mat{X}) \Mat{A}_{\Mat{X}}$, where $\Mat{A}_{\Mat{X}}$ is obtained from $\Mat{K}(\Mat{X},\Mat{X})$. 
Similarly, 
\begin{equation}
	\Psi(\Mat{D}_r) = \Phi(\Mat{D}_r) \Mat{A}_{r} = \Phi(\bigcup_{i}\Mat{X}_i) \Mat{A}_{r}, \; [\Vec{y}_i]_r \neq 0\;.
	\label{eqn:psi_dr}
\end{equation} 
As such, $\Psi(\Mat{D}_r)$ is fully determined if
$\Mat{A}_{r}$ is known as $\Mat{K}(\cdot,\Mat{D}_r) = \Mat{K}(\cdot,\bigcup_{i}\Mat{X}_i), \; [\Vec{y}_i]_r \neq 0$.
The orthogonality constraint for $\Psi(\Mat{D}_r)$ can be written as:
\begin{equation}
    \Psi(\Mat{D}_r)^T\Psi(\Mat{D}_r) = 
    \Mat{A}_{r}^T \Mat{K}(\Mat{D}_r,\Mat{D}_r)\Mat{A}_{r}
    = \mathbf{I}_p,
    \label{eqn:KGDL_orth_condition}
\end{equation}%
where $\Mat{K}(\Mat{D}_r,\Mat{D}_r) = \Mat{K}(\bigcup_{i}\Mat{X}_i,\bigcup_{i}\Mat{X}_i), \; \sk{y_i}_r \neq 0$.
%
Following similar steps to what developed in~\textsection~\ref{sec:dic_learning}, to obtain $\Psi(\Mat{D}_r)$ 
we need to maximize $\tr (\Psi(\Mat{D}_r)^T \Gamma \Psi(\Mat{D}_r))$
by taking the orthogonality constraint (\ie, Eq.~\eqref{eqn:KGDL_orth_condition}) into account. Here $\Gamma$ is the kernel form 
of~\eqref{eqn:s_r_dl} written as:
\begin{align}
\begin{split}
\label{eqn:s_r_kdl}
\Gamma =  
\sum_{i=1}^m \,\sk{y_i}_r\, \Big(   
     \mh \Psi(X_i) - 
    \sum_{j \neq r} \sk{y_i}_j \mh \Psi(D_j)\Big) .
\end{split}
\end{align}
Defining 
\begin{equation}
	\Mat{B}(\Mat{X},\Mat{Z}) = \Mat{K}(\Mat{X},\Mat{Z})\Mat{A}_{\bf Z}\Mat{A}_{\bf Z}^T \Mat{K}(\Mat{Z},\Mat{X}),
	\label{eqn:kgdl_B_matrix}
\end{equation}
and 
\begin{equation}
    \Mat{S}_r^\Psi = \sum_{i=1}^{m}{[\Vec{y}_i]_{r} \Bigl( \Mat{B}(\Mat{D}_r,\Mat{X}_i) -
    \sum_{\substack{j=1\\j \neq r}}^{N}{[\Vec{y}_i]_{j}\Mat{B}(\Mat{D}_r,\Mat{D}_j)}  
    \Bigr)}, 
    \label{eqn:kgdl_sol}
\end{equation}%
maximizing $\tr (\Psi(\Mat{D}_r)^T\Gamma \Psi(\Mat{D}_r))$ with the orthogonality constraint boils down to:
\begin{align}
    \Mat{A}_r^\ast &= \underset{\Mat{A}_r}{\operatorname{argmax}} \;
    \tr (\Mat{A}_r^T \Mat{S}_r^\Psi \Mat{A}_r),\notag\\  
    &\text{s.t.} ~~ \Mat{A}_r^T \Mat{K}(\Mat{D}_r,\Mat{D}_r) \Mat{A}_r =    \mathbf{I}_p.
    \label{eqn:GDL_opt1}
\end{align}%


%
The solution of the above problem is given by the 
leading eigenvectors of the generalized eigenvalue problem
$\Mat{S}_r^\Psi \Vec{v} = \lambda \Mat{K}(\Mat{D}_r,\Mat{D}_r) \Vec{v}$~\cite{Kokiopoulou_2011}.
In practice one might want to pick 
a small number of $\Mat{X}_i$ that contributed more dominantly to Eq.~\eqref{eqn:psi_dr} to describe $\Psi(\Mat{D}_r)$ and
hence reduce the computational load of dictionary learning.
The steps of determining the kernel dictionary for Grassmann manifolds (kgDL) are shown in Algorithm~\ref{alg:kgdl_pseudo_code}.

\begin{algorithm}[!tb]
\footnotesize
\SetKwBlock{KwInit}{Initialization.}{}
	\SetKwBlock{KwProc}{Processing.}{}
	\SetAlgoLined	
	\setcounter{AlgoLine}{1}	
	\KwIn
	{training set {$\mathbb{X} \mbox{=} \left\{  \Mat{X}_i \right\}_{i=1}^{m}$},
	where {$\Mat{X}_i \in \R^{d \times q_i}$};
	$k(\cdot,\cdot)$, a kernel function;
	$nIter$, the number of iterations.
	}
	\KwOut{
	Grassmann dictionary represented by \mbox{$\mathbb{A}= \left\{  \Mat{A}_{D_i} \right\}_{i=1}^{N}$}
	and \mbox{$\mathbb{K}= \left\{  \Mat{K}(\cdot,\Mat{D}_i) \right\}_{i=1}^{N}$}
	}
	\BlankLine
	\KwInit{	
	\For{$i \gets 1$ \KwTo $m$}{
	\mbox{$[\Mat{K}(\Mat{X}_i)]_{j,l} \gets k(\Vec{x}_{i,k},\Vec{x}_{i,l})$}\\
	\mbox{$\Mat{K}(\Mat{X}_i) = \Mat{U}_{\Mat{X}_i}\Mat{\Sigma}_{\Mat{X}_i}\Mat{U}_{\Mat{X}_i}^T$} 
	\tcp*{SVD of $\Mat{K}(\Mat{D}_i)$}			
	\mbox{$ \Mat{A}_{X_i} \gets \Mat{U}_{\Mat{X}_i\downarrow}\Mat{\Sigma}_{\Mat{X}_i\downarrow}^{-1/2}$}
	}
	Initialize the dictionary {$\mathbb{D}= \left\{  \Mat{D}_i \right\}_{i=1}^{N}$}
	by selecting $N$ samples from $\mathbb{X}$ randomly\;
	\mbox{$\Mat{\Gamma} \gets \Mat{0}_{N \times N}$}\;
	\mbox{$\Vec{\gamma} \gets \Mat{0}_{N \times 1}$}\;
	}
	\KwProc{
	\For{ $t = 1$ \KwTo $nIter$}{
	\tcp{Coding Step}%
	\mbox{Use kgSC or kgLC algorithms to obtain the codes $\Vec{y}_i$}\; 
	\tcp{Dictionary update step}%
	\For{ $r = 1$ \KwTo $N$}{
	\mbox{$\Mat{K}(\cdot,\Mat{D}_r) \gets \Mat{K}(\cdot,\bigcup_{i,[\Vec{y}_{i}]_r \neq 0}\Mat{X}_i)$}\;
	\mbox{Compute $\Mat{S}_r^\Psi$ according to Eq.~\eqref{eqn:kgdl_sol}.}\;
	\mbox{$\{\lambda_k,\Vec{v}_k\} \gets$
	 generalized eigen(values/vectors) of} $\Mat{S}_r^\Psi{\Vec{v}} = 
	 \lambda \Mat{K}(\Mat{D}_r,\Mat{D}_r)\Vec{v};
	\;\; \lambda_1 \geq \lambda_2 \geq \cdots \geq \lambda_d$\;
	\mbox{$\Mat{A}_{r}^\ast \gets [{\Vec{v}}_1|{\Vec{v}}_2|\cdots|{\Vec{v}}_p]$}\;
	}
	}
	}
	\caption{\small Kernelized Grassmann Dictionary~Learning~(kgDL)}
	\label{alg:kgdl_pseudo_code}
\end{algorithm} 

\section{Experiments}
\label{sec:experiments}

Two sets of experiments\footnote{Matlab codes are available at
\mbox{\url{https://sites.google.com/site/mehrtashharandi/}}}
are presented in this section.
In the first set of experiments, we evaluate the performance of the
proposed coding methods (as described in~\textsection~\ref{sec:Coding}
and~\textsection~\ref{sec:kernel_coding}) without dictionary learning. 
This is to contrast proposed coding schemes to previous state-of-the-art
methods on several popular closed-set classification tasks.
To this end, each point in the training set is considered as an atom in the
dictionary.
Since the atoms in the dictionary are labeled in this case, the residual
error approach for classification 
(as described in~\textsection~\ref{sec:sparse_classification} and~\textsection~\ref{sec:kernel_coding}) 
will be used to determine the label of a query point.
In the second set of experiments, the performance of the coding methods is
evaluated
in conjunction with the proposed dictionary learning algorithms described
in~\textsection~\ref{sec:dic_learning}.
Before delving into experiments, we discuss how videos and image-sets can
be modeled by linear subspaces and hence
as points on Grassmann manifolds.

 
\subsection{Representing Image-Sets and Videos on Grassmann Manifolds}
\label{sec:image_video_modeling}

Let us define a video as an ordered collection of images with time-stamp
information,
and an image-set as simply an orderless collection of images.
In this section we briefly demonstrate how videos and image-sets can be
modeled by subspaces
(and hence as points on Grassmann manifolds).
We first consider an approach where the time-stamp information is ignored,
followed by an approach where the dynamics of image sequences are taken
into account.

\subsubsection{Modeling of Appearance}
\label{sec:modelling_image_sets}

The appearance of an image-set or video $\mathbb{F} =
\{\Vec{f}_1,\Vec{f}_2,\cdots,\Vec{f}_\tau\}$,
where $\Vec{f}_i \in \R^d$ is the vectorized representation of
$i$-th observation (frame in video),
can be represented by a linear subspace through any orthogonalization
procedure like SVD.
More specifically, let $\Mat{U} \Mat{\Sigma} \Mat{V}^T$
be the SVD of $\mathbb{F}$. The first $p$ columns of $\Mat{U}$ represent
an optimized subspace of order $p$ (in the mean square sense)
for $\mathbb{F}$ and can be seen as a point on the Grassmann manifold
$\GRASS{p}{d}$.
%

Modeling by linear subspaces generally does not take into account the order
of images.
While this property sounds restrictive,
in many practical situations (like object recognition from video),
the order of frames may not be important for decision making.
However, it is possible to capture information related to order through an
extended type of image-sets,
obtained through a block Hankel matrix formalism~\cite{DSA:CVPR:2011}.

\subsubsection{Modeling of Dynamics}
\label{sec:modelling_dynamics}

A video can be represented by an ARMA model to explicitly capture
dynamics~\cite{Doretto_2003,Turaga_PAMI_2011}.
A set of ordered images $\{\Vec{f}(t)\}_{t=1}^{\tau}; \Vec{f}(t) \in
\R^d$
can be modeled as the output of an ARMA model by:

\noindent
\begin{align}
    ~ \Vec{f}(t) & = \Mat{C}\Vec{z}(t)+\Vec{w}(t),  ~~~ \Vec{w}(t)\thicksim
\mathcal{N}(0,\Mat{R}).
      \label{eqn:ARMA_equ1}
      \\
    ~ \Vec{z}(t+1) & = \Mat{A}\Vec{z}(t)+\Vec{v}(t), ~~~~~
\Vec{v}(t)\thicksim \mathcal{N}(0,\Mat{Q}),
      \label{eqn:ARMA_equ2}
\end{align}%

\noindent
where
$\Vec{z}(t) \in \R^n$ is the hidden state vector at time $t$,
$\Mat{A} \in  \R^{n \times n}$
and
$\Mat{C} \in  \R^{d \times n}$
are the transition and measurement matrices, respectively,
while
$\Vec{w}$ and $\Vec{v}$ are noise components modeled as normal distributions
with zero mean and covariance matrices
$\Mat{R} \in  \R^{d \times d}$
and
$\Mat{Q} \in  \R^{n \times n}$,
respectively. Loosely speaking, one advantage of the ARMA model
is that it decouples the appearance of the spatio-temporal data
(modeled by $\Mat{C}$) from the dynamics (represented by $\Mat{A}$).

The transition and measurement matrices can be estimated through a set of
feature vectors.
More specifically, if $\mathbb{F}_\tau=[\Vec{f}(1)| \Vec{f}(2)| \cdots|
\Vec{f}(\tau)]$
represents the feature matrix for time indexes \mbox{$1,2,\cdots,\tau$},
the estimated transition $\Mat{\widehat{A}}$ and measurement
$\Mat{\widehat{C}}$ matrices
can be obtained via the SVD of $\mathbb{F}_\tau=\Mat{U}\Mat{\Sigma}\Mat{V}^T$,
as follows:

\begin{align}
    \Mat{\widehat{A}} & = \Mat{\Sigma}\Mat{V}^T\Mat{D}_1\Mat{V}(\Mat{V}^T
\Mat{D}_2\Mat{V})^{-1}\Mat{\Sigma}^{-1}\;.
    \label{eqn:ARMA_equ3}
    \\
    \Mat{\widehat{C}} & = \Mat{U}\;,
    \label{eqn:ARMA_equ4}
\end{align}%

\noindent
where

\begin{equation*}
\Mat{D}_1 = \begin{bmatrix}
       \Vec{0}_{\tau\mbox{-}1}^T &0           \\
       \mathbf{I}_{\tau - 1} &\Vec{0}_{\tau\mbox{-}1}
     \end{bmatrix}
\mbox{~~and~~}
 \Mat{D}_2 = \begin{bmatrix}
       \mathbf{I}_{\tau-1} &\Vec{0}_{\tau\mbox{-}1}\\
       \Vec{0}_{\tau\mbox{-}1}^T &0
     \end{bmatrix}.
\end{equation*}%

Two ARMA models can be compared based on the subspace angles between the
column-spaces of their observability matrices~\cite{DeCock_2002}.
The extended observability matrix of an ARMA model is given by
\begin{equation*}
\Mat{O}_\infty = [~ \Mat{C}^T|\Mat{(CA)}^T|\Mat{(CA^2)}^T| ~\cdots~
|\Mat{(CA^n)}^T|\cdots ~]^T.
\end{equation*}
The extended observability matrix is usually approximated by the finite
observability matrix as~\cite{Turaga_PAMI_2011}:

\begin{equation}
\Mat{O}_m
=
\left[
  \Mat{C}^T|\Mat{(C A)}^T|\Mat{(C A^2)}^T|\cdots|\Mat{(C A^{(m-1)})}^T
\right]^T.
\label{eqn:ARMA_obs}
\end{equation}%
%

For a given video,
the finite observability parameter of the ARMA model is estimated as
described above.
To represent the subspace spanned by the columns of $\Mat{O}_m$, 
an orthonormal basis can be computed through Gram-Schmidt orthonormalization.
As a result, a linear dynamic system can be described as a point on a
Grassmann manifold
corresponding to the column space of the observability matrix.
The appearance modeling presented in \textsection~
\ref{sec:modelling_image_sets}
can be seen as a special case of ARMA modeling, where {$m = 1$}.


\subsection{Coding on Grassmann Manifolds}

In this part we compare and contrast the performance of the proposed methods
against several state-of-the-art methods:
Discriminant Canonical Correlation Analysis (DCC)~\cite{DCCA:PAMI:2007},
kernelized Affine Hull Method (KAHM)~\cite{Cevikalp_CVPR_2010},
Grassmann Discriminant Analysis (GDA)~\cite{HAMM2008_ICML},
Graph-embedding Grassmann Discriminant Analysis (GGDA)~\cite{Harandi_CVPR_2011}
and the intrinsic sparse coding (iSC)~\cite{Vemuri_ICML_2013}.
We evaluate the performance on the tasks of 
\textbf{(i)} gender recognition from gait, 
\textbf{(ii)} hand gesture recognition,
and 
\textbf{(iii)} scene analysis.

DCC is an iterative learning method that maximizes a measure of
discrimination between image sets
where the distance between sets is expressed by canonical correlations. 
In KAHM, images are considered as points in a linear or affine feature space,
while image sets are characterized by a convex geometric region (affine or
convex hull) spanned by their feature points.
GDA can be considered as an extension of kernel discriminant analysis over
Grassmann manifolds \cite{HAMM2008_ICML}.
In GDA, a transform over the Grassmann manifold is learned
to simultaneously maximize a measure of inter-class distances and minimize
intra-class distances.
GGDA can be considered as an extension of GDA,
where a local discriminant transform over Grassmann manifolds is learned.
This is achieved by incorporating local similarities/dissimilarities
through within-class and between-class similarity graphs.

We denote Grassmann sparse coding, Grassmann locality linear coding and
their kernel extensions by
{\it gSC}, {\it gLC}, {\it kgSC} and {\it kgLC}, respectively.
Based on preliminary experiments, the Gaussian
kernel~\cite{Shawe-Taylor:2004:KMP},
defined as  $k(\Vec{a},\Vec{b}) = \exp\left(-\gamma\|\Vec{a}-\Vec{b}\|^2
\right)$,
was used in kgSC and kgLC.
The value of the $\gamma$ parameter in all experiments was determined by
cross validation.

In the following three experiments, the Grassmannian dictionary for the
proposed gSC, gLC, kgSC and kgLC was constituted of all available training
data. The classification method described in
\textsection~\ref{sec:sparse_classification} was used to determine the
label of a query sample. The class specific residual error in the case of
kgSC and kgLC is obtained by kernelizing 
Eq.~\eqref{eqn:sparse_classification} as:
\begin{equation*}
    \varepsilon_c(\mathcal{X})= \Big\| \mh \Psi_\Mat{X} -
\sum\limits_{j=1}^{N} \sk y_j 
    \mh \Psi_{\Mat{D}_j} \delta\big( l(j)-c \big) \Big\|_F^2\;,
\end{equation*}%
where $l(j)$ is the class label of the $j^\mathrm{th}$ atom and $\delta(x)$
is the discrete Dirac function.


\subsubsection{Gender Recognition from Gait}
\label{exp_CASIA}

Gait is defined as ``manner of walking'' and can be used as a biometric measure
to recognize, among other things, the gender of humans~\cite{GEI_TIP_2009}.
For the task of gender recognition from gait data, we
have used Dataset-B of the CASIA Gait Database~\cite{CASIA_DATASET}
which constitutes of 124 individuals (93 males and 31 females).
In the CASIA dataset, the gait of each subject has been captured from 11
angles.
Every video is represented by one gait energy image (GEI) of size $32
\times 32$,
which has been shown to be effective in recognition of
gender~\cite{GEI_TIP_2009}.
Cropped samples of GEI images are shown in Fig.~\ref{fig:CASIA_example}.

We used the videos captured with normal clothes
and created a subspace of order 6 (based on preliminary experiments) using
the corresponding 11 GEIs.
This resulted in 731  points on  $\GRASS{1024}{6}$.
We then randomly selected 20 individuals (10 male, 10 female) as the
training set and used the remaining individuals for testing.
There is no overlap of individuals between the training and test sets.

Table~\ref{tab:results_CASIA} shows a comparison of gSC, gLC and their
kernelized versions
against DCC, KAHM, GDA and GGDA.
All four proposed methods consistently outperform previous state-of-the-art
algorithms with a big margin. 
The highest accuracy is attained by kgSC, followed by kgLC. As expected,
the kernel extensions perform better than
gSC and gLC. However, the burden of determining the kernel parameters could
be sometimes  overwhelming.

\def\CASIASIZE {0.175}
\begin{figure}[!tb]
      \begin{minipage}{1\columnwidth}
      \centering
        \includegraphics[width=\CASIASIZE
\columnwidth,keepaspectratio]{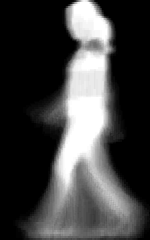}
        \includegraphics[width=\CASIASIZE
\columnwidth,keepaspectratio]{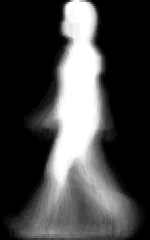}
        \includegraphics[width=\CASIASIZE
\columnwidth,keepaspectratio]{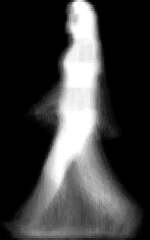}
        \includegraphics[width=\CASIASIZE
\columnwidth,keepaspectratio]{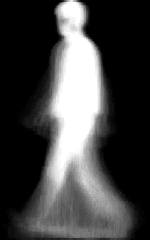}
      \end{minipage}
      \vspace{0.5ex}
  \caption
    {
    \small
    GEI samples from the CASIA gait dataset~\cite{CASIA_DATASET}.
    }
  \label{fig:CASIA_example}
\end{figure}

\begin{table}[!tb]
  \centering
  \caption
    {
    \small
    Recognition rate on the CASIA dataset for
    KAHM~\cite{Cevikalp_CVPR_2010},
    GDA~\cite{HAMM2008_ICML},
    GGDA~\cite{Harandi_CVPR_2011},
    iSC~\cite{Vemuri_ICML_2013}
    and the proposed approaches.
    }
  \label{tab:results_CASIA}
  \begin{tabular}{lc}
    \toprule
    {\bf Method}   & {\bf ~~Accuracy~~}\\
    \toprule
    {DCC~\cite{DCCA:PAMI:2007}}        	 &$85.9 \pm 6.6$\\
    {KAHM~\cite{Cevikalp_CVPR_2010} }        &$89.8 \pm 2.4$\\
    {GDA~\cite{HAMM2008_ICML}}               &$76.4 \pm 5.8$\\
    {GGDA~\cite{Harandi_CVPR_2011}}          &$84.3 \pm 4.8$\\
    {iSC~\cite{Vemuri_ICML_2013}}			 &$86.9 \pm 3.2$\\
    \midrule
    {\bf gSC}                                &{$\bf{94.3 \pm 2.1}$}\\
    {\bf gLC}                               &{$\bf{93.7 \pm 2.1}$}\\
    {\bf kgSC}       		                 &{$\bf{95.6 \pm 2.1}$}\\
    {\bf kgLC}		                         &{$\bf{95.2 \pm 1.6}$}\\
    \bottomrule
  \end{tabular}
\end{table}


\subsubsection{Hand Gesture Recognition}
\label{sec:hand_gesture}

For the hand-gesture recognition task,
we used the Cambridge hand-gesture dataset \cite{Kim_PAMI_2009}
which consists of 900 image sequences of 9 gesture classes.
Each class has 100 image sequences performed by 2 subjects,
captured under 5 illuminations and 10 arbitrary motions.
The 9 classes are defined by three  primitive hand shapes and three
primitive motions.
Each sequence was recorded at 30 fps with a resolution of {\small $320
\times 240$},
in front of a fixed camera having roughly isolated gestures in space and time.
See Fig.~\ref{fig:cambridge_hand_database} for examples.
We followed the test protocol defined by~\citet{Kim_PAMI_2009},
and resized all sequences to {\small $20\times20\times20$}.
Sequences with normal illumination are considered for training
while the remaining sequences (with different illumination characteristics)
are used for testing.

As per~\citet{Kim_PAMI_2009}, we report the recognition rates for the four
illumination sets.
In addition to GDA, GGDA and KAHM, the proposed methods were also compared
against
Tensor Canonical Correlation Analysis (TCCA)~\cite{Kim_PAMI_2009}
and Product Manifolds (PM)~\cite{Lui_JMLR_2012}.
TCCA, as the name implies, is the extension of canonical correlation
analysis to multiway data arrays or tensors.
Canonical correlation analysis is a standard method for measuring the
similarity between subspaces~\cite{Kim_PAMI_2009}.
In the PM method a tensor is characterized as a point on a product manifold
and  classification is performed on this space.
The product manifold is created by applying a modified high order singular
value decomposition on the tensors and interpreting each factorized space
as a Grassmann manifold.

For Grassmann-based methods, we represented each video through ARMA modeling.
The observability order of the ARMA model ($m$ in Eq.~\eqref{eqn:ARMA_obs})
and the subspace dimension (order of matrix $\Mat{C}$) were selected as 5
and 10, respectively.
The results, presented in Table~\ref{tab:table_hand_classification},
show that the proposed approaches obtain the highest performance.
kgLC achieves the best recognition accuracy on all four sets.
KAHM performs very poorly in this task,
which we conjecture is due to the illumination differences between the
training and test sets.

\begin{figure}[!tb]
  \begin{minipage}{1\columnwidth}
  \centering
    \includegraphics[width=0.48\textwidth,keepaspectratio]
{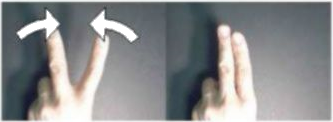}
    \hfill
    \includegraphics[width=0.48\textwidth,keepaspectratio]
{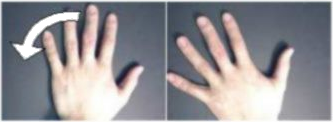}
  \end{minipage}
  \caption
    {
    \small
    Examples of hand actions in the Cambridge dataset~\cite{Kim_PAMI_2009}.
    }
  \label{fig:cambridge_hand_database}
\end{figure}

\begin{table*}[!tb]
  \centering 
  \caption{{\bf Hand-gesture recognition.} Recognition accuracy for the
hand-gesture recognition task using
    KAHM~\cite{Cevikalp_CVPR_2010}, GDA~\cite{HAMM2008_ICML},
GGDA~\cite{Harandi_CVPR_2011}, TCCA~\cite{Kim_PAMI_2009}, 
    Product Manifold~(PM)~\cite{Lui_JMLR_2012}, iSC~\cite{Vemuri_ICML_2013}
and the proposed approaches.      
  } 
  \begin{tabular}{lccccc}
    \toprule
    \bf{Method}    &\bf{Set1}  &\bf{Set2}  &\bf{Set3}  &\bf{Set4} 
&\bf{Overall}\\
    \toprule
    {TCCA~\cite{Kim_PAMI_2009}}            &$81$     &$81$     &$78$   
&$86$   &$82 \pm 3.5$\\
    {KAHM~\cite{Cevikalp_CVPR_2010}}       &$43$     &$43$     &$43$   
&$41$   &$43 \pm 1.4$\\    
    {GDA~\cite{HAMM2008_ICML}}             &$92$     &$85$     &$84$   
&$87$   &$87.4 \pm 3.8$\\
    {GGDA~\cite{Harandi_CVPR_2011}}        &$91$     &$91$     &$88$   
&$94$   &$91.1 \pm 2.5$\\
    {PM~\cite{Lui_JMLR_2012}}              &$93$     &$89$     &$91$   
&$94$   &$91.7 \pm 2.3$\\
    {iSC~\cite{Vemuri_ICML_2013}}		   &$93$	 &$94$	   &$89$	&$92$	&$92.4
\pm 2.1$\\
    \midrule
    {\bf gSC}                  & $\bf{93}$ & $\bf{92}$ & $\bf{93}$ &
$\bf{94}$  & $\bf{93.3 \pm 0.9}$ \\
    {\bf gLC}                 & $\bf{96}$ & $\bf{94}$ & $\bf{96}$ &
$\bf{97}$  & $\bf{95.4 \pm 1.3}$ \\
    {\bf kgSC}	               & $\bf{96}$ & $\bf{92}$ & $\bf{93}$ &
$\bf{97}$  & $\bf{94.4 \pm 2.0}$ \\
    {\bf kgLC}		           & $\bf{96}$ & $\bf{94}$ & $\bf{96}$ & $\bf{98}$ 
& $\bf{95.7 \pm 1.6}$ \\
    \bottomrule
  \end{tabular}  
  \label{tab:table_hand_classification}
\end{table*}


\subsubsection{Scene Analysis}
\label{sec:scene_analysis}

For scene analysis,
we employed the UCSD traffic dataset~\cite{Chan_CVPR2005},
which contains 254 video sequences of highway traffic of varying patterns
(\eg light, heavy)
in various weather conditions (\eg, cloudy, raining, sunny).
Each video was recorded with a resolution of {$320 \times 240$} pixels,
for a duration ranging from 42 to 52 frames.
Here we have used a normalized grayscale $48 \times 48$ version of the dataset.
The normalization process for each video clip involves subtracting the mean
image
and normalizing the pixel intensities to unit variance.
This is useful to reduce the impact of illumination variations.

The dataset is labeled into three classes with respect to the amount of
traffic congestion in each sequence.
In total there are 44 sequences of heavy traffic (slow or stop-and-go speeds),
45 of medium traffic (reduced speed),
and 165 of light traffic (normal speed).
See Fig.~\ref{fig:Traffic_example} for examples.

We represented each video on a Grassmann manifold through ARMA modeling.
The observability order of the ARMA model and the subspace dimension were
selected as 5 and 10 respectively.

In addition to GDA, GGDA and KAHM, the proposed methods were also compared
against Linear Dynamical System (LDS)
and Compressive Sensing Linear Dynamical System
(CS-LDS)~\cite{Turaga_ECCV_2010}.
The results, presented in Table~\ref{tab:table_scene_classification},
show that the proposed approaches obtain the best overall performance, with
kgLC
achieving the highest overall accuracy. 
{
It is worth mentioning that the performance of kgLC competes with the
state-of-the-art algorithms on this 
dataset (\eg, Ravichandran \etal report an accuracy of 95.6\%~\cite{Ravichandran_ACCV_2011}).
}

\begin{figure*}[!tb]
  \begin{minipage}{1\textwidth}
    \centering
    \includegraphics[width=0.32\textwidth]{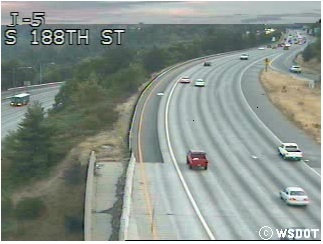}
    \hfill
    \includegraphics[width=0.32\textwidth]{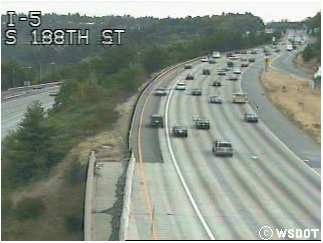}
    \hfill
    \includegraphics[width=0.32\textwidth]{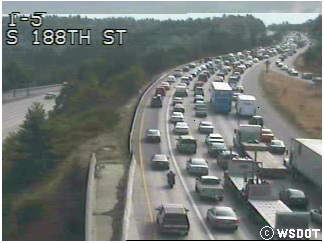}
    \vspace{-2ex}
    \caption
      {
      \small
      Representative examples of the three classes in UCSD traffic video
dataset~\cite{Chan_CVPR2005}.
      From left to right: examples of light, medium, and heavy traffic.
      }
    \label{fig:Traffic_example}
  \end{minipage}
\end{figure*}

\begin{table*}[!tb]
  \centering
  \caption
    {
    Average correct recognition rate on the UCSD video traffic dataset for
dynamic spatio-temporal models using
    LDS~\cite{Turaga_ECCV_2010}, Compressive-Sensing
LDS~\cite{Turaga_ECCV_2010},
    GDA~\cite{HAMM2008_ICML},
    GGDA~\cite{Harandi_CVPR_2011},
    and the proposed approaches.
    }
  \label{tab:table_scene_classification}
  \vspace{1ex}

  \begin{tabular}{lccccc}
    \toprule
    \bf{Method}    &\bf{Exp1}  &\bf{Exp2}  &\bf{Exp3}  &\bf{Exp4} 
&\bf{Overall}\\
    \toprule
    {LDS~\cite{Turaga_ECCV_2010}}         &$85.7$       &$85.9$      
&$87.5$       &$92.1$       &$87.8 \pm 3.0$\\
    {CS-LDS~\cite{Turaga_ECCV_2010}}      &$84.1$       &$87.5$      
&$89.1$       &$85.7$       &$86.6 \pm 2.2$\\
    {KAHM~\cite{Cevikalp_CVPR_2010}}      &$84.1$       &$79.7$      
&$82.8$       &$84.1$       &$82.7 \pm 2.1$\\
    {GDA~\cite{HAMM2008_ICML}}            &$82.5$       &$85.9$      
&$70.3$       &$77.8$       &$79.1 \pm 6.7$\\
    {GGDA~\cite{Harandi_CVPR_2011}}       &$87.3$       &$89.1$      
&$90.6$  		&$90.5$       &$89.4 \pm 1.5$\\
    {iSC~\cite{Vemuri_ICML_2013}}		  &$93.7$	 	&$87.5$	   	 
&$90.6$		&$96.8$		  &$92.2 \pm 4.0$\\
    \midrule
    {\bf gSC}                  			  & $\bf{93.7}$ & $\bf{87.5}$ &
$\bf{95.3}$ & $\bf{95.2}$  & $\bf{92.9 \pm 3.7}$ \\
    {\bf gLC}                 		      & $\bf{96.8}$ & $\bf{85.9}$ &
$\bf{92.2}$ & $\bf{93.7}$  & $\bf{92.2 \pm 4.5}$ \\
    {\bf kgSC}	               			  & $\bf{96.8}$ & $\bf{89.1}$ &
$\bf{95.3}$ & $\bf{98.4}$  & $\bf{94.9 \pm 4.1}$ \\
    {\bf kgLC}		           			  & $\bf{95.2}$ & $\bf{92.2}$ & $\bf{96.9}$ &
$\bf{96.8}$  & $\bf{95.3 \pm 2.2}$ \\
    \bottomrule
  \end{tabular}
\end{table*}

\subsection{Synthetic Data}
\label{sec:synthetic_data}

To contrast the Log-Euclidean (lE-SC) and intrinsic
(iSC)~\cite{Cetingul_TMI_2014,Vemuri_ICML_2013} 
solutions against the proposed gSC approach we performed two experiments
with synthetic data.
Specifically, we considered two multi-class classification problems over
{$\GRASS{2}{6}$}.
The first experiment involved a relatively simple classification problem
that matched the properties of the Log-Euclidean approach,
while the second experiment considered a more realistic scenario.

In both experiments,  we randomly generated four classes over the
{$\GRASS{2}{6}$},
where the samples in each class obey a normal distribution on a specific
tangent space of {$\GRASS{2}{6}$}.
This can be achieved by considering normal distributions over the specific
tangent space of {$\GRASS{2}{6}$}
followed by mapping the points back to {$\GRASS{2}{6}$} using the
exponential map
(see~\citet{Begelfor_CVPR_2006} for details of the exponential map).
We created four classification problems with increasing difficulty,
by fixing the mean of each class and increasing the class variance.
In the following discussion, the problems will be referred as
{\it `easy'}, {\it `medium'}, {\it `hard'}, and {\it `very hard'}.

For a given problem, 8~samples per class were considered as the dictionary
atoms,
while 1000 samples per class were generated as query data.
This results in a multiclass recognition problem with 4000 samples and a
dictionary of size 32.
All the generated samples were then mapped back to the manifold using the
exponential map
and were used in the Log-Euclidean, intrinsic and the proposed sparse
coding approaches.
For each task, the data generation procedure was repeated ten times;
average recognition rates are reported.

In the first experiment, we considered distributions over the identity
tangent space, \ie,
\mbox{$\mathcal{P} = \mathrm{span\Big(}
\begin{bmatrix}
\mathbf{I}_{2}\\
\Mat{0}
\end{bmatrix}
\Big)
$}%
.
The results are presented in Table~\ref{tab:table_syn_data} under
Experiment~$\#1$.
By increasing the class variance, samples from various classes are intertwined,
which in turn leads to a decrease in recognition accuracy.
As for the log-Euclidean approach, we considered two setups. In the first
setup, the center of 
projection was fixed at $\mathcal{P}$. In the second setup (shown as
{lE-SC-adaptive}), the center 
of projection was set to the Fr\'{e}chet mean of data. 
Even though this experiment matches the characteristics of the
Log-Euclidean approach
(since the prior knowledge of class distribution is available),
both gSC and iSC approaches obtain on par performance for the easy case.
For the medium case, Log-Euclidean approaches achieve higher accuracy
followed by gSC and iSC.
We note that the fixed log-Euclidean method performs better than the
adaptive setup for this experiment.

In the second experiment we relaxed the location of tangent space
in order to simulate a more challenging scenario.
More specifically, instead of generating distributions over the identity
tangent space,
the tangent space was selected randomly.
As shown in Table~\ref{tab:table_syn_data} under Experiment~$\#2$,
the Log-Euclidean approaches perform poorly when compared to gSC and iSC.
Among the two setups of the log-Euclidean approach, the adaptive one
performs better than the fixed one.
Similar to the previous experiments, the gSC approach consistently
outperforms iSC.

\begin{table}[!tb]
  \centering
  \caption
    {
    \small
    Comparison of the proposed gSC approach with the Log-Euclidean sparse
coding (lE-SC)
    and intrinsic sparse coding (iSC)~\cite{Vemuri_ICML_2013} methods on
synthetic data.
    In the first experiment, samples in each class obey a normal
distribution over the identity tangent space.
    The second experiment reflects a more challenging scenario where
samples in each class obey a normal distribution over a random 
    tangent space, instead of the identity tangent space.
    }
  \label{tab:table_syn_data}
  \begin{tabular}{l cc cc}
    \toprule
    &\multicolumn{2}{c}{Experiment~$\#$1}
&\multicolumn{2}{c}{Experiment~$\#2$}\\
    \bf{Task}    &\bf{Easy}  &\bf{Medium}  &\bf{Hard}  &\bf{Very Hard}\\
    \midrule
    {lE-SC}             				& $99.5\%$      & $90.8\%$      & $55.7\%$     
& $49.9\%$             \\
    {lE-SC-adaptive}       				& $92.9\%$      & $88.7\%$      & $57.3\%$  
   & $50.6\%$             \\
    {iSC~\cite{Vemuri_ICML_2013}}       & $98.6\%$      & $84.1\%$      &
$64.7\%$      & $53.4\%$             \\
    {gSC}             				    & $99.2\%$      & $86.6\%$      & $66.9\%$   
  & $57.4\%$             \\
    \bottomrule
  \end{tabular}
\end{table}

\subsection{Dictionary Learning}

Here we analyze the performance of the proposed dictionary learning
techniques as described in \textsection~\ref{sec:dic_learning} 
on three classification tasks: face recognition, action recognition and
dynamic texture classification. 
In all the following experiments, an SVM classifier with a Gaussian kernel
was used to perform recognition. 
That is, the training and testing data were first coded by the learned
dictionary and then the sparse codes were fed to an SVM classifier.
Parameters for the SVM classifier were determined by cross validation.
 
Since the intrinsic dictionary learning as proposed
by~\citet{Vemuri_ICML_2013} 
has no analytic solution on Grassmann manifolds, we will just compare gSC,
gLC and their 
kernel extensions in conjunction with dictionary learning against 
Discriminant Canonical Correlation Analysis (DCC)~\cite{DCCA:PAMI:2007},
kernelized Affine Hull Method (KAHM)~\cite{Cevikalp_CVPR_2010},
Grassmann Discriminant Analysis (GDA)~\cite{HAMM2008_ICML} and 
Graph-embedding Grassmann Discriminant Analysis (GGDA)~\cite{Harandi_CVPR_2011}
in the following experiments.

\subsubsection{Face Recognition}
\label{sec:face_recognition}

While face recognition from a single still image has been extensively studied,
recognition based on a group of still images is relatively new.
A popular choice for modeling image-sets
is by representing them through linear subspaces
\cite{HAMM2008_ICML,Harandi_CVPR_2011}.
For the task of image-set face recognition, we used the YouTube celebrity
dataset \cite{YT_Celebrity}
which contains 1910 video clips of 47 subjects.
See Fig.~\ref{fig:YT_Celebrity_example} for examples.
Face recognition on this dataset is challenging, 
since the videos have a high compression ratio and most of them have
low-resolution.

To create an image set from a video, we used a cascaded face locator
\cite{Viola:IJCV:2004} 
to extract face regions from each video, followed by
resizing regions to $96 \times 96$ and describing them via histogram of
Local Binary Patterns 
(LBP)~\cite{LBP_PAMI_2002}.
Then each image set (corresponding to a video) was represented by a linear
subspace of order $5$. 
We randomly chose 70\% of the dataset for training and the remaining 30\%
for testing.
The process of random splitting was repeated ten times and the average
classification accuracy is reported. 

The results in Table \ref{tab:table_face_rec} show that the proposed
coding methods (using dictionaries provided by their corresponding
dictionary learning algorithms)
outperform the competitors.
kgSC with dictionary learning achieved the highest accuracy of 73.91\%, more
than 3 percentage points better than gSC with dictionary learning.
Similarly, the performance of kgLC with dictionary learning is observed to
be higher than gLC with dictionary learning.

\begin{table}[!b]
  \centering
  \caption
    {
    Average correct recognition rate (CRR) on the YouTube celebrity dataset.
    }
  \label{tab:table_face_rec}
%
  \begin{tabular}{lccccc}
    \toprule
    \bf{Method}    &\bf{CRR}  \\
    \toprule
    {DCC~\cite{DCCA:PAMI:2007}}      			  &$60.21 \pm 2.9$ \\
    {KAHM~\cite{Cevikalp_CVPR_2010}}              &$67.49 \pm 3.5$ \\
    {GDA~\cite{HAMM2008_ICML}}                    &$58.72 \pm 3.0$\\
    {GGDA~\cite{Harandi_CVPR_2011}}               &$61.06 \pm 2.2$\\
    \midrule
    {\bf{gSC-dic}}                                &$\bf{70.47 \pm 1.7}$\\
    {\bf{gLC-dic}}                               &$\bf{71.74 \pm 2.3}$\\
    {\bf{kgSC-dic}}                               &$\bf{73.91 \pm 1.9}$\\
    {\bf{kgLC-dic}}                              &$\bf{73.53 \pm 2.3}$\\
    \bottomrule
  \end{tabular}
\end{table}

\def \YTSIZE {0.225}
\begin{figure}[!tb]  
  \begin{minipage}{1\columnwidth}\center
  \includegraphics[width=\YTSIZE \columnwidth,keepaspectratio]{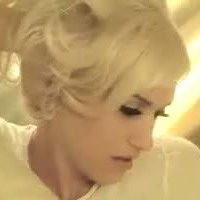}
  \includegraphics[width=\YTSIZE \columnwidth,keepaspectratio]{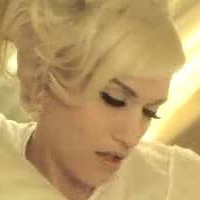}
  \includegraphics[width=\YTSIZE \columnwidth,keepaspectratio]{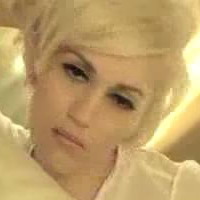}
  \includegraphics[width=\YTSIZE \columnwidth,keepaspectratio]{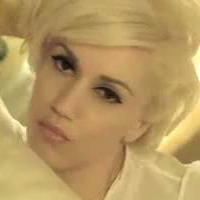}
  \end{minipage}
  \begin{minipage}{1\columnwidth}\center
  \includegraphics[width=\YTSIZE \columnwidth,keepaspectratio]{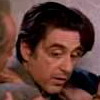}
  \includegraphics[width=\YTSIZE \columnwidth,keepaspectratio]{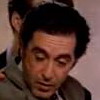}
  \includegraphics[width=\YTSIZE \columnwidth,keepaspectratio]{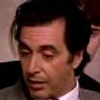}
  \includegraphics[width=\YTSIZE \columnwidth,keepaspectratio]{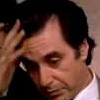}
  \end{minipage}
  \caption
    {
    Examples of YouTube celebrity dataset (grayscale versions of images
were used in our experiments).
    }
  \label{fig:YT_Celebrity_example}
\end{figure}


\subsubsection{Ballet Dataset}
\label{exp_Ballet}

The Ballet dataset contains 44 videos collected from an instructional
ballet DVD \cite{Ballet_Dataset}.
The dataset consists of 8 complex motion patterns performed by 3 subjects,
The actions include:
{\it `left-to-right hand opening'},
{\it `right-to-left hand opening'},
{\it `standing hand opening'},
{\it `leg swinging'},
{\it `jumping'},
{\it `turning'},
{\it `hopping'} and
{\it `standing still'}.
Fig.~\ref{fig:Ballet_example} shows examples.
The dataset is challenging due to the significant intra-class variations in
terms of speed,
spatial and temporal scale, clothing and movement.

We extracted 2400 image sets by grouping 6 frames that exhibited the same
action into one image set.
We described each image set by a subspace of order~4 with Histogram of
Oriented Gradients (HOG) as frame descriptor~\cite{Dalal:2005:HOG}.
Available samples were randomly split into training and testing sets
(the number of image sets in both sets was even).
The process of random splitting was repeated
ten times and the average classification accuracy is reported.

Table~\ref{tab:table_ballet} shows that all proposed coding approaches
have superior performance as compared to DCC, KAHM, GDA and GGDA.
For example, the difference between gSC with dictionary learning (gSC-dic)
and the closest state-of-the-art competitor (GGDA), is more than six
percentage points.

\begin{table}[!b]
  \centering
  \caption
    {
    Average recognition rate on the Ballet dataset.
    }
  \label{tab:table_ballet}
%
  \begin{tabular}{lccccc}
    \toprule
    \bf{Method}    &\bf{CRR}  \\
    \toprule
    {DCC~\cite{DCCA:PAMI:2007}}            	  	  &$41.95 \pm 9.6$ \\
    {KAHM~\cite{Cevikalp_CVPR_2010}}              &$70.05 \pm 0.9$ \\
    {GDA~\cite{HAMM2008_ICML}}                    &$67.33 \pm 1.1$\\
    {GGDA~\cite{Harandi_CVPR_2011}}               &$73.54 \pm 2.0$\\
    \midrule
    {\bf{gSC-dic}}                                &$\bf{79.64 \pm 1.1}$\\
    {\bf{gLC-dic}}                               &$\bf{81.42 \pm 0.8}$\\
    {\bf{kgSC-dic}}                               &$\bf{83.53 \pm 0.8}$\\
    {\bf{kgLC-dic}}                              &$\bf{86.94 \pm 1.1}$\\
    \bottomrule
  \end{tabular}
\end{table}

\def\BALLETSIZE {0.225}
\begin{figure}[!tb]
      \begin{minipage}{1.0\columnwidth}
      \center
        \includegraphics[width=\BALLETSIZE
\columnwidth,keepaspectratio]{ballet_fem1.jpg}
        \includegraphics[width=\BALLETSIZE
\columnwidth,keepaspectratio]{ballet_fem2.jpg}
        \includegraphics[width=\BALLETSIZE
\columnwidth,keepaspectratio]{ballet_fem3.jpg}
        \includegraphics[width=\BALLETSIZE
\columnwidth,keepaspectratio]{ballet_fem4.jpg}
      \end{minipage}
      \begin{minipage}{1.0\columnwidth}
      \center
        \includegraphics[width=\BALLETSIZE
\columnwidth,keepaspectratio]{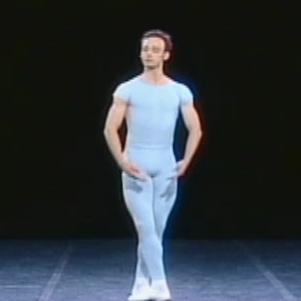}
        \includegraphics[width=\BALLETSIZE
\columnwidth,keepaspectratio]{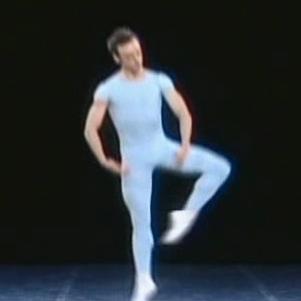}
        \includegraphics[width=\BALLETSIZE
\columnwidth,keepaspectratio]{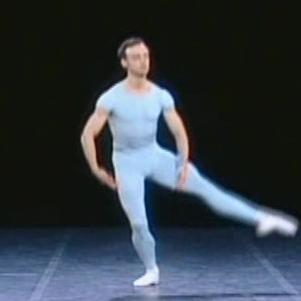}
        \includegraphics[width=\BALLETSIZE
\columnwidth,keepaspectratio]{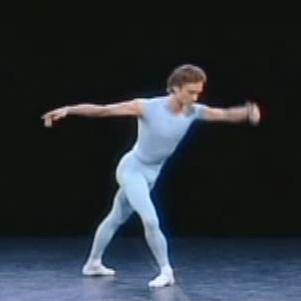}
      \end{minipage}
  \caption
    {
    Examples from the Ballet dataset~\cite{Ballet_Dataset}.
    }
  \label{fig:Ballet_example}
\end{figure}


\subsubsection{Dynamic Texture Classification}
\label{sec:dynamic_texture}

Dynamic textures are videos of moving scenes
that exhibit certain stationary properties in the time
domain~\cite{DynTex_Dataset,XU_ICCV_2011}.
Such videos are pervasive in various environments,
such as sequences of rivers, clouds, fire, swarms of birds, humans in crowds.
In our experiment, we used the challenging DynTex++
dataset~\cite{DynTex_Dataset},
which is comprised of 36 classes,
each of which contains 100 sequences with a fixed size of $50 \times 50
\times 50$
(see Fig.~\ref{fig:DynTex_example} for example classes).
We split the dataset into training and testing sets by randomly assigning
half of the videos of each class to 
the training set and using the rest as query data.
The random split was repeated twenty times; average accuracy is reported.

To generate Grassmann points,
we used histogram of LBP from Three Orthogonal Planes
(LBP-TOP)~\cite{LBPTOP:PAMI:2007}
which takes into account the dynamics within the videos.
To this end, each video is split into subvideos of length 10, with a 7
frame overlap.
Each subvideo is then described by a histogram of LBP-TOP features.
From the subvideo descriptors, we extracted a subspace of order 5 as the
video representation on a Grassmann manifold.  

In addition to DCC, KAHM, GDA and GGDA,
the proposed approaches were compared against two methods specifically
designed for dynamic texture classification:
dynamic fractal spectrum (DFS)~\cite{XU_ICCV_2011}
and Distance Learning Pegasos (DL-Pegasos)~\cite{DynTex_Dataset}.
DFS can be seen as concatenation of two components:
(i)~a volumetric component that encodes the stochastic self-similarities of
dynamic textures as 3D volumes, and 
(ii)~a multi-slice dynamic component that captures structures of dynamic
textures on 2D slices along various views of the 3D volume.
DL-Pegasos uses three descriptors (LBP, HOG and LDS)
and learns how the descriptors can be linearly combined to best
discriminate between dynamic texture classes.

The overall classification results are presented in
Table~\ref{tab:table_dyntex}.
The proposed kgLC with dictionary learning (kgLC-dic) obtains the highest
average recognition rate.

\def \DTSIZE {0.225}
\begin{figure}[!tb]
  \centering
  \begin{minipage}{1\columnwidth}
  \includegraphics[width=\DTSIZE \columnwidth,keepaspectratio]
{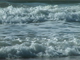}
  \includegraphics[width=\DTSIZE \columnwidth,keepaspectratio]
{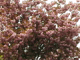}
  \includegraphics[width=\DTSIZE \columnwidth,keepaspectratio]
{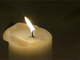}
  \includegraphics[width=\DTSIZE \columnwidth,keepaspectratio]
{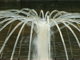}
  \end{minipage}
  \begin{minipage}{1\columnwidth}
  \includegraphics[width=\DTSIZE \columnwidth,keepaspectratio]
{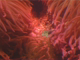}
  \includegraphics[width=\DTSIZE \columnwidth,keepaspectratio]
{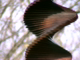}
  \includegraphics[width=\DTSIZE \columnwidth,keepaspectratio]
{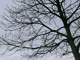}
  \includegraphics[width=\DTSIZE \columnwidth,keepaspectratio]
{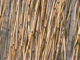}
  \end{minipage}
  \vspace{1ex}
  \caption
    {
    Example classes of DynTex++ dataset (grayscale images were used in our
experiments).
    }
  \label{fig:DynTex_example}
\end{figure}

\begin{table}[!tb]
  \centering
  \caption
    {
    Average recognition rate on the DynTex++ dataset.
    }
  \label{tab:table_dyntex}
%
  \begin{tabular}{lccccc}
    \toprule
    \bf{Method}    &\bf{CRR}  \\
    \toprule
    {DL-PEGASOS~\cite{DynTex_Dataset}}            &$63.7$\\
    {DFS~\cite{XU_ICCV_2011}}                     &$89.9$ \\ 
    {DCC~\cite{DCCA:PAMI:2007}}				      &$53.2$ \\ 
    {KAHM~\cite{Cevikalp_CVPR_2010}}              &$82.8$ \\
    {GDA~\cite{HAMM2008_ICML}}                    &$81.2$\\
    {GGDA~\cite{Harandi_CVPR_2011}}               &$84.1$\\
    \midrule
    {\bf{gSC-dic}}                                    &$\bf{90.3}$\\
    {\bf{gLC-dic}}                                   &$\bf{91.8}$\\
    {\bf{kgSC-dic}}                             &$\bf{92.8}$\\
    {\bf{kgLC-dic}}                            &$\bf{93.2}$\\
    \bottomrule
  \end{tabular}
\end{table}

\subsection{Computational Complexity.}

Let $\mathbb{D} = \big\{ \mathcal{D}_i \big\}_{i=1}^{N},\; \mathcal{D}_i
\in \GRASS{p}{d}$ be a Grassmannian dictionary 
and $\mathcal{X} \in \GRASS{p}{d}$ be a query sample with $\mathcal{D}_j =
\mathrm{span}(\Mat{D}_j)$ and $\mathcal{X} = \mathrm{span}(\Mat{X})$.
In terms of computational load, the gSC algorithm requires the values in
the form of $\|\Mat{X}^T\Mat{D}_j\|_F^2$ which can be computed in
${O}(Ndp^2)$ flops for the whole dictionary. 

The iSC algorithm~\cite{Vemuri_ICML_2013} solves~\eqref{eqn:intrinsic_sc}
for coding. To this end, computing the logarithm map on $\GRASS{p}{d}$ is required.
A very efficient implementation of the logarithm map on $\GRASS{p}{d}$
requires a matrix inversion of size $p \times p$, 
two matrix multiplications of size $d \times p$, and a thin SVD of size $d
\times p$. Computing thin SVD using a stable algorithm like 
the Golub-Reinsch~\cite{Golub:1996:MC} requires $14dp^2 + 8p^3$ flops.
This adds up to a total of ${O}\big(9Np^3 + 16Ndp^2\big)$ flops for the
whole dictionary.

To give the reader a better sense on the computational efficiency of gSC
algorithm, we performed an experiment.
Assuming that the complexity of vector sparse coding for both algorithms is
similar (iSC is a constrained coding 
approach so it is very likely to be more expensive than an unconstrained
one like gSC), we measured the time 
required to compute $\mathcal{K}_\Mat{X}$ in Eq.~\eqref{eqn:Opt_Grass3}
against projecting $\mathbb{D}$
to the tangent space of $\mathcal{X}$. 
To this end,
we considered three cases using the geometry of $\GRASS{3}{100}$,
$\GRASS{3}{1000}$  and $\GRASS{3}{10000}$. 
We randomly generated a dictionary of size 1000 for each case and measured
the time required to compute $\mathcal{K}_\Mat{X}$ and tangent projection
for 1000 query points.
The results given in Table~\ref{tab:table_running_time} show that the gSC
algorithm is significantly faster than iSC.

\begin{table}[!tb]
  \centering
  \caption
    {
    \small
    Running time comparison between the proposed gSC approach and intrinsic 
    sparse coding (iSC)~\cite{Vemuri_ICML_2013} method on synthetic data.
    Times are measured in second on a Quad-core i7 machine with Matlab.
    }
  \label{tab:table_running_time}
  \begin{tabular}{l c c c}
    \toprule
    \bf{Task}    &$\GRASS{3}{100}$ &$\GRASS{3}{1000}$  &$\GRASS{3}{10000}$\\
    \midrule    
    {iSC~\cite{Vemuri_ICML_2013}}       & 77.8s      & 234.2s      & 1320.7s \\
    {gSC}             				    & 4.1s        & 16.9s        & 106.4s    \\
    \bottomrule
  \end{tabular}
\end{table}

\section{Main Findings and Future Directions}
\label{sec:conclusion}

With the aim of coding on Grassmann manifolds,
we proposed to embed such manifolds into the space of symmetric matrices by
an isometric projection.
We then showed how sparse coding and locality linear coding can be
performed in the induced space.
We also tackled the problem of dictionary learning on Grassmann manifolds 
and devised a closed-form solution for updating a dictionary atom by atom,
using the geometry of induced space.
Finally, we proposed a kernelized version of sparse coding, locality linear
coding and dictionary learning on Grassmann manifolds,
to handle non-linearity in data.

Experiments on several classification tasks
(gender recognition, gesture classification, scene analysis, face
recognition, action recognition and dynamic texture
classification) show that the proposed approaches achieve notable
improvements in discrimination accuracy,
in comparison to state-of-the-art methods such as discriminant analysis of
canonical correlation analysis~\cite{DCCA:PAMI:2007}
affine hull method~\cite{Cevikalp_CVPR_2010},
Grassmann discriminant analysis~\cite{HAMM2008_ICML},
graph-embedding Grassmann discriminant analysis~\cite{Harandi_CVPR_2011}
and intrinsic sparse coding method~\cite{Vemuri_ICML_2013}.

In this work a Grassmann dictionary is learned such that a reconstruction
error is minimized.
This is not necessarily the optimum solution when labeled data is available.
To benefit from labeled data, it has recently been proposed to consider a
discriminative penalty term
along with the reconstruction error term in the optimization
process~\cite{Mairal_PAMI12}.
We are currently pursuing this line of research and seeking solutions for
discriminative dictionary learning on Grassmann manifolds.
Moreover, our formulation can be understood as an extrinsic solution to the
problem of coding and dictionary learning on Grassmann manifolds. 
It would be interesting to devise intrinsic solutions based on the geometry
of the induced space, \ie, symmetric matrices.

\section*{Acknowledgements}
\label{sec:Acknowledgements}

NICTA is funded by the Australian Government
as represented by the {\it Department of Broadband, Communications and the
Digital Economy},
as well as the Australian Research Council through the {\it ICT Centre of
Excellence} program.
This work is funded in part through an ARC Discovery grant DP130104567.
C. Shen's participation was in part supported by ARC Future Fellowship
F120100969.


\section{Appendix}

In this appendix, we give proofs for the following theorems.

\begin{theorem}
\label{thm:closest-point-on-Grassman}
Let $\m X$ be an $d \times d$ symmetric matrix
with eigenvalue decomposition $\m X = \m U \m D \m U^T$,
where $\m D$ contains the eigenvalues $\lambda_i$ of $\m X$ in
descending order.  Let $\m U_p$ be the $d \times p$ matrix
consisting of the first $p$ columns of $\m U$. 
Then $\mh U_p = \m U_p \m U_p^T$ is the closest matrix in $\PGRASS{p}{d}$ to $\m X$
(under the Frobenius norm). 
\end{theorem}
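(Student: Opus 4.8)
The plan is to reduce the statement to a classical trace maximization (a special case of Ky Fan's maximum principle) and then dispatch that by an elementary argument in the eigenbasis of $\m X$.

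First I would parametrize the feasible set. Every $\m P \in \PGRASS{p}{d}$ is the orthogonal projector onto a $p$-dimensional subspace, hence can be written $\m P = \m V \m V^T$ for some $d \times p$ matrix $\m V$ with $\m V^T \m V = \mathbf{I}_p$, and conversely every such product lies in $\PGRASS{p}{d}$. Expanding the objective and using that $\m X$ and $\m P$ are symmetric,
\begin{equation*}
\| \m X - \m P \|_F^2 = \|\m X\|_F^2 - 2\,\tr(\m X \m P) + \|\m P\|_F^2 .
\end{equation*}
Since $\m P$ is a rank-$p$ orthogonal projector, $\|\m P\|_F^2 = \tr(\m P^T\m P) = \tr(\m P^2) = \tr(\m P) = p$, while $\|\m X\|_F^2$ is independent of $\m P$. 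Hence minimizing $\|\m X - \m P\|_F$ over $\PGRASS{p}{d}$ is equivalent to maximizing $\tr(\m X \m P) = \tr(\m V^T \m X \m V) = \sum_{i=1}^p \Vec{v}_i^T \m X \Vec{v}_i$ over all $\m V$ with orthonormal columns $\Vec{v}_1, \ldots, \Vec{v}_p$.

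Next I would solve this maximization. Writing the eigendecomposition as $\m X = \sum_{j=1}^d \lambda_j \Vec{u}_j \Vec{u}_j^T$ with $\lambda_1 \ge \cdots \ge \lambda_d$, and setting $c_{ij} = (\Vec{u}_j^T \Vec{v}_i)^2 \ge 0$,
\begin{equation*}
\tr(\m V^T \m X \m V) = \sum_{i=1}^p \sum_{j=1}^d \lambda_j c_{ij} = \sum_{j=1}^d \lambda_j\, w_j , \qquad w_j := \sum_{i=1}^p c_{ij} .
\end{equation*}
Because $\{\Vec{v}_i\}_{i=1}^p$ extends to an orthonormal basis of $\R^d$, Bessel's inequality gives $w_j = \sum_{i=1}^p (\Vec{u}_j^T\Vec{v}_i)^2 \le \|\Vec{u}_j\|^2 = 1$ for each $j$; because $\{\Vec{u}_j\}_{j=1}^d$ is an orthonormal basis, $\sum_{j=1}^d c_{ij} = \|\Vec{v}_i\|^2 = 1$ for each $i$, so $\sum_{j=1}^d w_j = p$. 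Thus $\tr(\m V^T\m X\m V) = \sum_j \lambda_j w_j$ is bounded by the value of the linear program $\max \sum_j \lambda_j w_j$ subject to $0 \le w_j \le 1$, $\sum_j w_j = p$, whose optimum is $\sum_{i=1}^p \lambda_i$, obtained by placing weight $1$ on the $p$ largest eigenvalues. This bound is attained by $\m V = \m U_p$, for which $\tr(\m U_p^T \m X \m U_p) = \sum_{i=1}^p \lambda_i$.

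Finally I would conclude: since $\m U_p$ maximizes $\tr(\m V^T\m X\m V)$ over matrices with orthonormal columns, the projector $\mh U_p = \m U_p\m U_p^T \in \PGRASS{p}{d}$ minimizes $\|\m X - \m P\|_F$ over $\PGRASS{p}{d}$, which is the claim. The step I expect to require the most care is the linear-programming bound $\sum_j \lambda_j w_j \le \sum_{i=1}^p \lambda_i$ under the constraints on the $w_j$ — equivalently, a self-contained justification of the relevant case of Ky Fan's theorem; the remaining steps are routine. (When $\lambda_p = \lambda_{p+1}$ the minimizer is not unique, but $\mh U_p$ is still a minimizer, so the statement stands.)
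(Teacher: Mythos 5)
Your proposal is correct, and its first half — expanding $\| \m X - \m P \|_F^2$, noting $\|\m P\|_F^2 = p$ is constant on $\PGRASS{p}{d}$, and reducing to maximizing $\tr(\m V^T \m X \m V)$ over matrices with orthonormal columns — is exactly the reduction used in the paper. Where you diverge is in how the trace maximization is settled. The paper proves the stronger eigenvalue-domination statement $\mu_k \le \lambda_k$ for each $k$ (where $\mu_k$ are the eigenvalues of $\m W^T \m X \m W$) via a Courant--Fischer style dimension count: the span of $\Vec w_1,\ldots,\Vec w_k$ and the span of $\Vec u_k,\ldots,\Vec u_d$ must intersect nontrivially, and a Rayleigh-quotient sandwich on a vector in the intersection yields $\mu_k \le \lambda_k$, whence $\sum_i \mu_i \le \sum_i \lambda_i$. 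You instead bound the trace directly through the weights $w_j = \sum_{i=1}^p (\Vec u_j^T \Vec v_i)^2$, which satisfy $0 \le w_j \le 1$ and $\sum_j w_j = p$, and then invoke the linear program $\max \sum_j \lambda_j w_j$ under those constraints. Your route is arguably more elementary (no interlacing, no subspace intersection argument) at the cost of proving only the trace inequality rather than termwise domination of the eigenvalues; the paper's route gives the sharper spectral fact essentially for free. The one step you flag as needing care closes in one line: for feasible $w$, $\sum_j \lambda_j w_j - \sum_{j=1}^p \lambda_j = \sum_{j \le p} \lambda_j (w_j - 1) + \sum_{j > p} \lambda_j w_j \le \lambda_p \bigl( \sum_j w_j - p \bigr) = 0$, using $\lambda_j \ge \lambda_p$ with $w_j - 1 \le 0$ for $j \le p$ and $\lambda_j \le \lambda_p$ with $w_j \ge 0$ for $j > p$.
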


\begin{proof}
Observe that
\(
\| \mh V - \m X\|_F^2 = \|\mh V \|_F^2 + \| \m X \|_F^2
-2 \lip \mh V, \m X \rip.
\)
Since $\|\mh V\|_F$ (for $\mh V \in \PGRASS{p}{d}$) and $\|\m X\|_F$ are fixed, minimizing
$\| \mh V - \m X\|_F$ over $\mh V \in \PGRASS{p}{d}$ is the same as maximizing 
$\lip \mh V, \m X \rip$.
If $\mh V = \m V \m V^T$, we may write
\(
\lip \mh V, \m X\rip = \tr ( \m V \m V^T \m X) = \tr(\m V^T \m X \m V),
\)
so it is sufficient to maximize $\tr(\m V^T \m X \m V)$ over
$\m V \in \GRASS{p}{n}$.

If $\m X = \m U \diag(\lambda_1, \ldots, \lambda_d) \m U^T$,
then $\m U_p^T \m X \m U_p = \diag(\lambda_1, \ldots, \lambda_p)$
and $\tr (\m U_p^T \m X \m U_p) = \sum_{i=1}^p \lambda_i$.
On the other hand, let $\m W \in \GRASS{p}{d}$.
Then $\m W^T \m X \m W$ is symmetric of dimension $p\times p$.
Let $\mu_1 \ge \mu_2 \ge \ldots \ge \mu_p$ 
be its eigenvalues and
$\Vec a_i, \, i=1, \ldots, p$ 
the corresponding unit eigenvectors.
Let $\Vec w_i = \m W \Vec a_i$.  Then the $\Vec w_i$ are orthogonal
unit vectors, and $\Vec w_i^T \m X \Vec w_i = \mu_i$.

For $k = 1$ to $p$, let $A_k$ be the subspace of $R^d$ 
spanned by $\Vec w_1, \ldots, \Vec w_k$ and $B_k$ be the space
spanned by the eigenvectors
$\Vec u_k, \ldots, \Vec u_d$ of $\m X$.  Counting dimensions, $A_k$ and $B_k$ 
must have non-trivial intersection.  Let $\Vec v$ be a non-zero vector in this
intersection, and write $\Vec v = \sum_{i=1}^k \alpha_i \Vec w_i= \sum_{i=k}^d \beta_i \Vec u_i$.  Then
\begin{align}
\begin{split}
\label{eq:Courant-inequality}
\mu_k \le
\frac{\sum_{i=1}^k \alpha_i^2 \mu_i}{\sum_{i=1}^k \alpha_i^2 } 
 =
\frac{\Vec v^T \m X \Vec v}{\Vec v^T\Vec v} = 
\frac{\sum_{i=k}^d \beta_i^2 \lambda_i}{\sum_{i=k}^d \beta_i^2 } 
\le \lambda_k ~.
\end{split}
\end{align}
Therefore $\mu_k \le \lambda_k$ and 
\(
\tr (\m W^T \m X \m W) = \sum_{i=1}^p \mu_i \le \sum_{i=1}^p \lambda_i = 
\tr (\m U^T \m X \m U) ~.  
\)
\qed 
\end{proof}


\paragraph{\bf The chordal mean. }
For two points (matrices) $\mh X$ and $\mh Y$ in $\PGRASS{p}{d}$ the
distance $\| \mh X - \mh Y\|_F$ is called the {\em chordal distance}
between the two points.  Given several points $\mh X_i$, 
the {\em $\ell_2$ chordal mean} of $\{\mh X_i\}_{i=1}^m$ is the element $\mh Y \in
\PGRASS{p}{d}$ that minimizes $\sum_{i=1}^m \| \mh Y - \mh X_i\|_F^2$.
There is a closed-form solution for the chordal mean of a set of points
in a Grassman manifold.

\begin{theorem}
\label{thm:chordal-mean}
The chordal mean of a set of points $\mh X_i \in\PGRASS{p}{d}$
is equal to 
\(
{\rm Proj} (\sum_{i=1}^m \mh X_i).
\)
\end{theorem}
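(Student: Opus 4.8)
The plan is to reduce the chordal-mean problem to the closest-point problem already settled in Theorem~\ref{thm:closest-point-on-Grassman}. Set $\m S := \sum_{i=1}^m \mh X_i$, a $d\times d$ symmetric matrix. The single fact that makes everything work is that every element of $\PGRASS{p}{d}$ has the same Frobenius norm: if $\mh Y = \m Y\m Y^T$ with $\m Y^T\m Y = \mathbf{I}_p$, then $\|\mh Y\|_F^2 = \tr(\m Y\m Y^T\m Y\m Y^T) = \tr(\m Y^T\m Y\m Y^T\m Y) = \tr(\mathbf{I}_p) = p$; in particular $\|\mh Y\|_F^2 = p$ is constant as $\mh Y$ ranges over the manifold, and likewise $\|\mh X_i\|_F^2 = p$ for each $i$.

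First I would expand the objective. For any $\mh Y\in\PGRASS{p}{d}$,
\begin{equation*}
\sum_{i=1}^m \|\mh Y - \mh X_i\|_F^2
= \sum_{i=1}^m\Bigl(\|\mh Y\|_F^2 - 2\lip \mh Y, \mh X_i\rip + \|\mh X_i\|_F^2\Bigr)
= 2mp - 2\lip \mh Y, \m S\rip ,
\end{equation*}
using bilinearity and symmetry of the Frobenius inner product. Hence minimizing $\sum_i\|\mh Y-\mh X_i\|_F^2$ over $\mh Y\in\PGRASS{p}{d}$ is equivalent to \emph{maximizing} $\lip\mh Y,\m S\rip$ over the same set.

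Next I would turn this maximization back into a nearest-point problem, now with target $\m S$: for $\mh Y\in\PGRASS{p}{d}$,
\begin{equation*}
\|\mh Y - \m S\|_F^2 = \|\mh Y\|_F^2 - 2\lip\mh Y,\m S\rip + \|\m S\|_F^2 = p + \|\m S\|_F^2 - 2\lip\mh Y,\m S\rip ,
\end{equation*}
and since $p$ and $\|\m S\|_F^2$ do not depend on $\mh Y$, maximizing $\lip\mh Y,\m S\rip$ over $\PGRASS{p}{d}$ is the same as minimizing $\|\mh Y - \m S\|_F^2$ over $\PGRASS{p}{d}$, i.e.\ finding the closest point of $\PGRASS{p}{d}$ to the symmetric matrix $\m S$. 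By Theorem~\ref{thm:closest-point-on-Grassman} this minimizer equals $\mathrm{Proj}(\m S) = \mathrm{Proj}\bigl(\sum_{i=1}^m \mh X_i\bigr)$, proving the theorem. The weighted version stated in the main text follows verbatim by carrying a weight $\sk y_i$ in front of each summand: the would-be constant term becomes $\bigl(\sum_i\sk y_i\bigr)p$, still independent of $\mh Y$, and $\m S$ is replaced by $\sum_i\sk y_i\mh D_i$.

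There is no genuine obstacle here beyond this bookkeeping: all the geometric content — that $\mathrm{Proj}(\cdot)$ really is the nearest-point projection onto $\PGRASS{p}{d}$, obtained from the top-$p$ eigenvectors of the eigendecomposition — is already supplied by Theorem~\ref{thm:closest-point-on-Grassman}. The only subtlety worth a remark is uniqueness: $\mathrm{Proj}(\m S)$ is single-valued precisely when the $p$-th and $(p+1)$-th eigenvalues of $\m S$ differ, and if they coincide the chordal mean should be understood as the set of minimizers rather than a single point.
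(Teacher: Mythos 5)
Your proof is correct and follows essentially the same route as the paper's: both reduce the chordal-mean objective, via constancy of $\|\mh Y\|_F^2 = p$ on $\PGRASS{p}{d}$, to maximizing $\lip \mh Y, \sum_i \mh X_i\rip$, and then invoke Theorem~\ref{thm:closest-point-on-Grassman} to identify the maximizer as ${\rm Proj}(\sum_i \mh X_i)$. Your added remarks on the weighted case and on uniqueness when $\lambda_p = \lambda_{p+1}$ are accurate but not needed for the statement as given.
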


\begin{proof}
The proof is analogous to the formula for the chordal mean of 
rotation matrices, given in \cite{Hartley_IJCV_13}.
By the same argument as in Theorem~\ref{thm:closest-point-on-Grassman},
minimizing $\sum_{i=1}^m \|\mh X_i - \mh Y\|_F^2$ is equivalent to
maximizing $\sum_{i=1}^m \lip \mh X_i, \mh Y \rip
= \lip \sum_{i=1}^m \,\mh X_i, \mh Y \rip$.  Thus, the required
$\mh Y$ is the closest point in $\PGRASS{p}{d}$ to 
$\sum_{i=1}^m \,\mh X_i$, as stated. \qed
\end{proof}

	\small 
	\balance
	\bibliographystyle{ieee}	
	\bibliography{references}
\end{document}